\newtheorem{theorem}{Theorem}[section]
\newtheorem{lemma}{Lemma}[section]
\newtheorem{prop}{Proposition}[section]
\newtheorem{cor}{Corollary}[section]
\theoremstyle{definition}
\newtheorem{definition}{Definition}[section]
\DeclareMathOperator*{\argmax}{arg\,max}
\DeclareMathOperator*{\argmin}{arg\,min}
\newcommand{\ex}[0]{\mathbb{E}}
\newcommand{\pr}[0]{\mathbb{P}}
\newcommand{\xvec}[0]{\mathbf{x}}
\newcommand{\wvec}[0]{\mathbf{w}}
\newcommand{\zvec}[0]{\mathbf{z}}
\newcommand{\noise}[0]{\epsilon}
\newcommand{\evec}[0]{\bm{\xi}}
\newcommand{\yvec}[0]{\mathbf{y}}
\newcommand{\ovec}[0]{\mathbf{o}}
\newcommand{\eff}[0]{\mathbf{e}}
\newcommand{\numb}[0]{k}
\newcommand{\ndist}[0]{F}
\newcommand{\tr}[0]{\intercal}
\newcommand{\bias}[0]{\mathbf{b}}
\newcommand{\vf}[0]{\mathbf{VF}}
\newcommand{\br}[0]{\mathbf{BR}}
\begin{document}

\title{Choosing the Right Weights: \\ Balancing Value, Strategy, and Noise in Recommender Systems
}
\author{Smitha Milli}
\author{Emma Pierson}
\author{Nikhil Garg}
\affiliation{
  \institution{Cornell Tech}
  \city{New York}
  \country{USA}
}

\begin{abstract}
Many recommender systems optimize a linear weighting of different user behaviors, such as clicks, likes, and shares. We analyze the optimal choice of weights from the perspectives of both users and content producers who strategically respond to the weights. We consider three aspects of each potential behavior: value-faithfulness (how well a behavior indicates whether the user values the content), strategy-robustness (how hard it is for producers to manipulate the behavior), and noisiness (how much estimation error there is in predicting the behavior). Our theoretical results show that for users, up-weighting more value-faithful and less noisy behaviors leads to higher utility, while for producers, up-weighting more value-faithful and strategy-robust behaviors leads to higher welfare (and the impact of noise is non-monotonic). Finally, we apply our framework to design weights on Facebook, using a large-scale dataset of approximately 70 million URLs shared on Facebook. Strikingly, we find that our user-optimal weight vector (a) delivers higher user value than a vector not accounting for variance; (b) \textit{also} enhances broader societal outcomes, reducing misinformation and raising the quality of the URL domains, outcomes that were not directly targeted in our theoretical framework.
\end{abstract}

\maketitle

\section{Introduction}
Most widely-used recommender systems are based on prediction and optimization of multiple behavioral signals. For example, a video platform may predict whether a user will click on a video, how long they will watch it, and whether or not they will give it a thumbs-up. These predictions need to then be aggregated into a final score that items for a user will be ranked by. Typically, the aggregation is done through a linear combination of the different signals. For example, leaked documents from TikTok~\citep{smith_2021} described the objective for ranking as $\pr(\text{like)} \cdot w_{\text{like}} + \pr(\text{comment}) \cdot w_{\text{comment}} + \ex[\text{playtime}] \cdot w_{\text{playtime}} + \pr(\text{play}) \cdot w_{\text{play}}$. Twitter also recently open-sourced the exact weights on the ten behaviors they use for ranking~\citep{twitter_2023}.

Unfortunately, the chosen weights can often have unintended consequences. For example, when Facebook introduced emoji reactions, they gave all emoji reactions a weight five times that of the standard thumbs-up. Internal evidence found the high weight on the angry reaction led to more misinformation, toxicity, and low-quality content~\citep{merrill2021five}. Moreover, content producers can especially be affected by these weights. Leaked Facebook documents stated that, \emph{``Research conducted in the EU reveals that political parties feel strongly that the change to the algorithm has forced them to skew negative in their communications on Facebook, with the downstream effect of leading them into more extreme policy positions,''}~\citep{hagey2021facebook,morris_2021}.

Even though they can have a major effect on the emergent dynamics of the platform, these weights are rarely the topic of formal research and there exist few guidelines for system designers on how to choose them. In this paper, we study how to optimally choose weights (for users and producers) when behaviors can vary along three dimensions that designers consider in practice: \emph{value-faithfulness}, \emph{strategy-robustness}, and \emph{noisiness}. Firstly, value-faithfulness is how indicative a behavior is of whether the user values the content or not. This concept is referenced, for example, by TikTok, which has stated that behaviors are ``weighted based on their value to a user''~\citep{tiktok_2020}. Secondly, strategy-robustness refers to how hard it is for producers to manipulate the behavior. A prime example of this is YouTube's shift from focusing less on views and more on explicit user behaviors such as likes and dislikes, in an effort to curb the rise of clickbait video titles~\citep{youtube_2019}. Lastly, noisiness, refers to the variance in machine learning predictions of the behavior. Variance is a common consideration in machine learning and depends, among other factors, on training set size. Netflix increasingly relied on implicit behaviors (e.g. views) over explicit behaviors (e.g. ratings) due to their greater prevalence ~\citep{gomez2015netflix}.


To study the optimal weight design problem, we posit a model in which two producers compete for the attention of one user. The recommender system ranks producers based on a linear combination of predictions of $\numb$ behaviors. However, producers can strategically adapt their items to increase the probability of different user behaviors. User utility depends on being shown a high {value} producer, and a producer's utility is the probability they are ranked highly minus their costs of strategic manipulation.  We find that, for the user, upweighting behaviors that are more value-faithful and less noisy leads to higher utility (and strategy-robustness has no impact), while for producers, upweighting behaviors that are more value-faithful and strategy-robust leads to higher welfare, i.e., higher average utility (and the impact of noise is non-monotonic).

Finally, we apply our framework to empirically design weights on Facebook, using a large-scale dataset of approximately 70 million URLs shared on Facebook. Strikingly, we find that our user-optimal weight vector (a) delivers higher user value than a vector not accounting for variance; (b) \textit{also} reduces misinformation and raises the quality of the URL domains, outcomes that were not directly incorporated in our user value definition.

\section{Related Work}
\paragraph{Designing weights in recommender systems.} 
Many recommender systems use a weighted combination of various behavior predictions to rank items~\citep{smith_2021,twitter_2023}. The choice of weights is typically not automated, and rather is chosen by employees based on performance in A/B tests, insights from user surveys, and qualitative judgement~\citep{tiktok_2020,twitter_2023,cameron_wodinsky_degeurin_germain_2022}. The weights can have a large impact on the emergent dynamics of the platform. For instance, when Facebook modified its weights in 2018 as part of its transition to the ``Meaningful Social Interactions Metric," Buzzfeed CEO Jonah Peretti warned that the changes were promoting the virality of divisive content, thereby incentivizing its production~\citep{hagey2021facebook}.

To bypass the manual weight creation process, some prior work tries to rank content by directly optimizing for users' latent \emph{value} for content~\citep{milli2021optimizing,maghakian2022personalized}. Latent value is unobserved and must be inferred from observed behaviors, but importantly, the relationship between value and the behaviors is empirically learned rather than manually specified through weights. While more elaborate, automated approaches are intriguing, the use of simple linear weights is widespread as it provides system designers with a more interpretable design lever that can be used to shape the platform. Moreover, the models used in these automated approaches are typically not capable of accounting for complex effects like strategic behavior, which humans may be more adept at factoring into their selection. 

Despite the importance of the weights, there exists little formal study of them in the recommender systems context or guidelines on how to select them. In this paper, we study how to choose weights when behaviors can vary along three dimensions: \emph{value-faithfulness}, \emph{strategy-robustness}, and \emph{noisiness}. Value-faithfulness indicates the degree to which a behavior reveals a user's genuine preference for an item. Although defining true value can be challenging~\citep{lyngs2018so}, we focus on users' reflective preferences. Recent research has shown that an overreliance on learning from implicit, less value-faithful signals can cause recommender systems to be misaligned with users' stated, reflective preferences~\citep{lu2018between,kleinberg2022challenge,agan2023automating,milli2023twitter}. Our work aims to offer guidelines for weight selection when behaviors exhibit variability in not just value-faithfulness, but also noisiness and strategy-robustness. For instance, in Section \ref{sec:users}, we demonstrate that, for users, there is a trade-off between choosing value-faithful behaviors and behaviors with lower estimation noise.

\paragraph{Strategic classification, ranking, and recommendation.} Our model considers how strategic producer behavior (alongside estimation noise and behavior value-faithfulness) should affect the design of recommender weights. Strategic behavior by content producers, particularly motivated organizations like news outlets and political parties, has been well-documented \citep{morris_2021,christin2020metrics,hagey2021facebook,smith_2023,meyerson2012youtube}. In our theoretical model, as in practice, producers compete against each other (and so, for example, effort by multiple producers may cancel each other out in equilibria). 

In machine learning, strategic adaptation has been primarily studied in the field of strategic \emph{classification}~\citep{bruckner2012static,hardt2016strategic}. \citet{kleinberg2020classifiers} study how to set linear weights on observed features that can be strategically changed; \citet{braverman_et_al:LIPIcs:2020:12025} show that noisier signals can lead to better equilibrium outcomes when there is heterogeneity in producer's cost functions. Relatively less work in machine learning has focused on the problem of strategic \emph{contests} in which participants must compete to receive desired outcomes, with \citet{liu2022strategic} being a notable exception -- they consider a rank competition setting in a single dimension. There is a rich theory of contests in economics~\citep{hillman1989politically,baye1996all,lazear1981rank,che2000difference,tullock1980cient}, and our model is most similar to the classic model of rank-order tournaments by \citet{lazear1981rank}. To this literature, our work contributes an analysis of how strategic behavior interacts with value-faithfulness and noisiness to influence the recommender system weight design problem.

Prior work has also specifically analyzed strategy in recommender systems~\citep{ben2018game,ben2020content,jagadeesan2022supply,hron2022modeling}, studying properties such as genre formation and producer profit at equilibria~\citep{jagadeesan2022supply} or the algorithmic factors that lead to existence of equilibria~\citep{hron2022modeling}. Relatedly, work on designing ratings systems has considered the importance of variance on consumers and users alike, when designing how users input rating behaviors, and how the platform uses ratings \cite{garg2019designing,garg2021designing,ma2022balancing}. However, the prior work does not model the fact that the ranking objective on a recommender system is typically a linear weighting of \emph{multiple} behaviors with some behaviors being easier to game than others, having higher variance, or being heterogeneously value-faithful. In our work, we focus on the design of these weights.

Finally, beyond recommender systems, literature across economics and operations research considers how and whether to use different feature dimensions---for example, in contracts and standardized testing \citep{holmstrom1987aggregation,holmstrom1991multitask,garg2021standardized,liu2021test}---and similarly characterizes how features' information properties may trade-off with other aspects, such as strategic behavior. 
\section{Model} \label{sec:model}
We model a system with one user and two producers. Each producer  $i \in \{-1, +1\}$ creates an item whose true \textit{value}\footnote{Here, we consider an item's true value to the user to be how the user would value the item upon reflection (as opposed to an immediate, automatic preference \cite{kleinberg2022challenge}).} to the user is $v(i)$, where $v(1) > v(-1)$. Users can interact with a producer's item through $\numb$ different \textit{behaviors}. For example, a user may \emph{click}, \emph{like}, and/or \emph{watch} a video. To rank the producers, the recommender system creates \textit{predictions} $\yvec \in \mathbb{R}^\numb$ of whether the user will engage with the item using each of these $k$ behaviors (using historical data from the user and the producers). It then combines the predictions into a final score $\wvec^T \yvec$, using a \textit{weight vector} $\wvec \in \mathbb{R}^\numb_{\geq 0}$. User utility depends on being shown a high {value} producer, and producer utility depends on being ranked first.

The platform's {design challenge} is to choose weights $\wvec \in \mathbb{R}_{\geq 0}^\numb$, to maximize user utility or producer welfare, i.e., the average utility of both producers. We assume $\|\wvec\|_p = 1$ for some $p$-norm $\|\cdot\|_p$, examples of which include the $\ell^1$ or $\ell^2$ norm.

We postulate that the predictions $\yvec(i) \in \mathbb{R}^\numb$ corresponding to each producer $i\in \{-1, +1\}$ are:
\begin{align}
    \yvec(i) = v(i) + \bias(i) + \evec(i) + \eff(i),
\end{align}
where $v$ reflects the item's \textit{true value}; $\bias_j(i)$ corresponds, for each behavior $j$ and item $i$, to the user's \textit{bias} for engaging with that item; $\evec(i)$ is a noise vector, reflecting variance in the predictions due to finite sample sizes~\citep{domingos2000unified}; $\eff_j(i)$ corresponds to producer $i$'s effort in strategically \textit{manipulating} users to engage in behavior $j$.

\subsection{Behavior Characteristics} We now detail each component of the prediction $\yvec(i)$ introduced above, which represent the three primary behavior characteristics we study: value-faithfulness, noisiness, and strategy-robustness.

\textbf{Value-faithfulness.}  Some behaviors are more indicative of whether the user values the item than others. For example, explicitly liking an item is more indicative of value than simply clicking on the item. To model this, each item $i$ has a behavior-specific bias: $\bias(i) \in \mathbb{R}^\numb$. The sum $v(i) + \bias(i) \in \mathbb{R}^\numb$ captures how likely a user is to engage in a behavior on producer $i$'s item (in the absence of any strategic effort from the producer). Then, the \emph{value-faithfulness} of a behavior $j \in [\numb]$ is 
\begin{align}
    \vf_j & = \ex[\yvec_j(1) - \yvec_j(-1)] \\ & = (v(1)  + \bias_j(1))- (v(-1) + \bias_j(-1)) \,.
\end{align}
The higher a behavior's value-faithfulness, the more likely the user is to engage in that behavior on the higher-valued item compared to the lower-valued item. For example, a \textit{like} has higher value-faithfulness than a \textit{click} because a user is more likely to only \textit{like} a high-valued item while they may \textit{click} on both high and low-valued items. Without loss of generality, we assume that $\vf > \mathbf{0}$ (if a behavior does not have positive value-faithfulness, we can always consider the opposite of the behavior instead, e.g. not clicking instead of clicking).

\textbf{Variance.} The predictions for the behaviors are made by a machine learning model trained on a finite dataset. Some behavior predictions may have higher variance over different realizations of the training data, especially when some behaviors have less historical data than others. We model this estimation error as random behavior-specific noise $\evec(i), \evec(-i) \sim \mathcal{N}(0, \Sigma)$ where $\Sigma \in \mathbb{R}^{\numb \times \numb}$ is a diagonal matrix. Thus, the prediction of a behavior $j$ has higher \emph{variance} than behavior $k$ if $\Sigma_{jj} > \Sigma_{kk}$. In analogy to a traditional bias-variance-noise decomposition~\citep{domingos2000unified}, we aim to capture the \emph{variance} over different realizations of the training data.

\textbf{Strategy-robustness.}   Given a weight vector $\wvec$, producers will strategically adapt their items to get a higher score under $\wvec$. Though we consider an item's true value to be fixed, a producer can put effort $\eff \in \mathbb{R}^k_{\geq 0}$ into increasing the probability that the user interacts with their item with each of the $\numb$ behaviors. For example, without increasing the quality of their content, the producer may craft a clickbait title to entice the user into clicking on it. The producer incurs a cost $c(\eff)$ for their effort. The cost is quadratic with some behaviors being higher cost to manipulate than others: $c(\eff) = \frac{1}{2}\eff^\intercal A\eff$ where $A$ is a diagonal matrix and all entries on the diagonal are unique and positive. We say that a behavior $i$ is more \emph{strategy-robust} than behavior $j$ if $A_{ii} > A_{jj}$.

\subsection{Ranking, Utility, and Equilibria}
Producer one is ranked first if $\wvec ^\tr \yvec(1) - \wvec ^\tr \yvec(-1) > 0$. Or equivalently, producer one is ranked first if $\noise(\wvec) < \ex[\wvec^\tr\yvec(1) - \wvec^\tr\yvec(-1)]$, where  $\noise(\wvec) = \wvec^\tr \evec(-1) - \wvec^\tr \evec(1)$ is the difference in noise terms. Letting $\ndist_\noise$ be the distribution of $\noise(\wvec)$, the probability that producer $i$ is ranked first is
\begin{align}
\pr_{\wvec}(R(i)=1 \mid \eff)  = \begin{cases}
    \ndist_\noise(\ex[\wvec^\tr\yvec(1) - \wvec^\tr\yvec(-1)]) & i = 1 \\
    1- \ndist_\noise(\ex[\wvec^\tr\yvec(1) - \wvec^\tr\yvec(-1)]) & i = -1
\end{cases}
\end{align}
where $R(i)$ is a random variable indicating producer $i$'s rank. 

An individual producer's expected utility is the probability they are ranked first minus the incurred cost of manipulation:
\begin{align}
\mathcal{U}_{\text{prod}}^{i}(\eff(i), \eff(-i); \wvec) =  \pr_{\wvec}(R(i)=1 \mid \eff) - c(\eff(i)) \,.
\end{align}
The \textit{producer welfare} is defined as the average utility of the producers,
\begin{align} \label{eq:prod-welfare}
& \mathcal{W}_{\text{prod}}(\eff(1), \eff(-1); \wvec)\\
& = \frac{1}{2}\sum_{i} \mathcal{U}_{\text{prod}}^i(\eff(i), \eff(-i); \wvec) \\
& = \frac{1}{2} - \frac{\sum_i c(\eff(i))}{2}\,.
\end{align}
The user's utility is the probability the higher-valued producer is ranked first:
\begin{align} \label{eq:user-utility}
\mathcal{U}_{\text{user}}(\eff(1), \eff(-1); \wvec) = \pr_{\wvec}(R(1)=1 \mid \eff) \,.
\end{align}
Producer $i$'s best response given fixed features for the other producer is
\begin{align}
 \br^i(\eff(-i); \wvec) = \argmax_{\mathbf{q} \in \mathbb{R}^\numb_{\geq 0}} \, \mathcal{U}_{\text{prod}}^i(\mathbf{q}, \eff(-i); \wvec)\,.
\end{align}
At a (pure Nash) equilibrium, the best responses of both producers are at a fixed point, defined formally below.
\begin{definition}[Equilibrium]
Given a fixed weight vector $\wvec$, an equilibrium consists of a pair of efforts $(\eff^*(1), \eff^*(-1))$ that satisfy $\br^i(\eff^*(-i); \wvec) = \eff^*(i)$ for $i \in \{-1, +1\}$.
\end{definition}
\section{User Utility Without Strategic Adaptation}

\begin{table*}[t]
\centering
\caption{The Impact of Value-Faithfulness, Variance, and Strategy-Robustness}
\begin{tabular}{|l|c|c|}
\hline
\multirow{2}{*}{\textbf{Aspect of behavior $j$}} & \textbf{Optimal user utility $\mathcal{U}_{\text{user}}^*$} & \textbf{Optimal producer welfare $\mathcal{W}_{\text{prod}}^*$} \\
& (\Cref{cor:user-strategy}) & (\Cref{thm:prod-welf}) \\ \hline
Value-faithfulness $\vf_j$ & Increases & Increases \\
\hline
Variance $\Sigma_{jj}$ & Decreases & Non-monotonic \\
\hline
Strategy-robustness $A_{jj}$ & Constant & Increases \\
\hline
\end{tabular}
\caption*{
\small{The effect of value-faithfulness, variance, and strategy-robustness on user utility under the user-optimal weight vector and producer welfare under the producer-optimal weight vector.}
}
\label{tab:behavior-aspects}
\end{table*}

\label{sec:users}
First, we analyze user utility in the absence of strategic adaptation from producers, i.e., when $\eff \triangleq \mathbf{0}$. Even without any strategic adaptation from producers, it is not obvious what the weights a practitioner should choose are. (We do not analyze producer welfare in the non-strategic setting because producer welfare is constant when producers do not manipulate, i.e., $\mathcal{W}_{\text{prod}}(0, 0; \wvec) = \frac12$.) 

One might assume that for the user it would be best to give the most weight to the most value-faithful behaviors. However, the value-faithful behaviors may not be the easiest to predict, and thus, may introduce more noise into the rankings. Indeed, our analysis, formalized in \Cref{thm:user-w}, shows that the optimal weight vector depends on a trade-off between choosing behaviors that are more value-faithful and choosing behaviors with lower estimation noise. Omitted proofs for this section can be found in \Cref{app:user-proofs}.

\begin{theorem} \label{thm:user-w}
Without any strategic adaptation, the weight vector that maximizes user utility is
\begin{align}
\wvec^* = (\Sigma^{-1}\vf)/\|\Sigma^{-1}\vf\|_p \,.
\end{align}
\end{theorem}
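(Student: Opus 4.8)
The plan is to rewrite user utility as a strictly increasing transformation of a scale-invariant ratio, maximize that ratio in closed form, and then check that the resulting direction lands inside the non-negative orthant so that the norm-and-sign constraints are automatically satisfied. First I would set $\eff \triangleq \mathbf{0}$ in the expected score gap, giving $\ex[\wvec^\tr\yvec(1) - \wvec^\tr\yvec(-1)] = \wvec^\tr\vf$. Since $\evec(1),\evec(-1)\sim\mathcal{N}(\mathbf{0},\Sigma)$ are independent, the noise difference $\noise(\wvec) = \wvec^\tr\evec(-1) - \wvec^\tr\evec(1)$ is mean-zero Gaussian with variance $2\,\wvec^\tr\Sigma\wvec$, so $\ndist_\noise$ is the CDF of $\mathcal{N}(0,\,2\,\wvec^\tr\Sigma\wvec)$. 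Substituting into \Cref{eq:user-utility} yields
\[
\uuser(\mathbf{0},\mathbf{0};\wvec) = \Phi\!\left(\frac{\wvec^\tr\vf}{\sqrt{2\,\wvec^\tr\Sigma\wvec}}\right),
\]
where $\Phi$ is the standard normal CDF.

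Next, because $\Phi$ is strictly increasing, maximizing $\uuser$ is equivalent to maximizing the ratio $g(\wvec) = \wvec^\tr\vf / \sqrt{\wvec^\tr\Sigma\wvec}$ (the constant $\sqrt{2}$ is irrelevant to the argmax). The key structural observation is that $g$ is invariant under positive rescaling, $g(t\wvec) = g(\wvec)$ for all $t>0$, so the constraint $\|\wvec\|_p = 1$ does not affect which \emph{direction} is optimal; it merely selects the unit-$p$-norm representative of that direction. I can therefore maximize $g$ over all nonzero $\wvec$ and normalize at the end, which conveniently makes the argument uniform across the choice of $p$.

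To solve the resulting generalized Rayleigh quotient, I would change variables by $\mathbf{u} = \Sigma^{1/2}\wvec$, which is well-defined since $\Sigma$ is diagonal with strictly positive entries. This transforms the objective into $\mathbf{u}^\tr(\Sigma^{-1/2}\vf)/\|\mathbf{u}\|_2$, which by Cauchy--Schwarz is maximized exactly when $\mathbf{u}$ is a positive multiple of $\Sigma^{-1/2}\vf$, attaining value $\|\Sigma^{-1/2}\vf\|_2$. Translating back gives the optimal direction $\wvec \propto \Sigma^{-1/2}\mathbf{u} = \Sigma^{-1}\vf$; applying Lagrange multipliers to $g$ directly would produce the same stationarity condition $\Sigma\wvec \propto \vf$.

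The step I expect to be the main obstacle is confirming feasibility, i.e.\ that the unconstrained maximizer actually lies in $\mathbb{R}^\numb_{\geq 0}$ so that the constrained and unconstrained optima coincide and no KKT/boundary analysis is required. This is precisely where the modeling assumptions pay off: $\Sigma$ diagonal with strictly positive diagonal entries makes $\Sigma^{-1}$ positive diagonal, and since $\vf > \mathbf{0}$ by assumption, every coordinate of $\Sigma^{-1}\vf$ is strictly positive. Hence the optimizer is automatically feasible, and normalizing by the $p$-norm gives $\wvec^* = (\Sigma^{-1}\vf)/\|\Sigma^{-1}\vf\|_p$, as claimed.
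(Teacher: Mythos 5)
Your proposal is correct and follows essentially the same route as the paper's proof: reduce utility to the scale-invariant ratio $\wvec^\tr\vf/\sqrt{\wvec^\tr\Sigma\wvec}$ via monotonicity of the Gaussian CDF, whiten with $\tilde{\wvec} = \Sigma^{1/2}\wvec$, apply Cauchy--Schwarz, and normalize by the $p$-norm at the end. Your explicit check that $\Sigma^{-1}\vf > \mathbf{0}$ (so the constraint $\wvec \geq \mathbf{0}$ is inactive and no KKT analysis is needed) is a small point the paper leaves implicit, but the argument is otherwise the same.
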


Consequently, the optimal weight on a behavior increases as its value-faithfulness increases or its variance decreases, stated formally below.
\begin{cor}
For any behavior, $j \in [\numb]$, the user-optimal weight on the behavior $\wvec^*_j$ monotonically increases in the behavior's value-faithfulness $\vf_j$ and monotonically decreases in the behavior's variance $\Sigma_{jj}$.
\end{cor}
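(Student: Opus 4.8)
The plan is to read off the closed form from \Cref{thm:user-w} and reduce both monotonicity claims to a single one-dimensional statement. Since $\Sigma$ is diagonal with positive diagonal, $\Sigma^{-1}$ is diagonal and $(\Sigma^{-1}\vf)_\ell = \vf_\ell/\Sigma_{\ell\ell}$. Writing $u_\ell \triangleq \vf_\ell/\Sigma_{\ell\ell}$, the theorem gives $\wvec^*_j = u_j/\|u\|_p$, and because $\vf > \mathbf{0}$ and $\Sigma_{\ell\ell} > 0$ we have $u_\ell > 0$ for every $\ell$. The key observation is that the parameters of behavior $j$ enter $\wvec^*$ only through the single scalar $u_j$: varying $\vf_j$ (with everything else fixed) leaves $u_\ell$ unchanged for $\ell \neq j$ and strictly increases $u_j$, since $u_j = \vf_j/\Sigma_{jj}$ with $\Sigma_{jj}$ fixed; while varying $\Sigma_{jj}$ leaves the other $u_\ell$ unchanged and strictly decreases $u_j$. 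Hence, by composition, it suffices to prove that $\wvec^*_j$ is strictly increasing in $u_j$ when the remaining coordinates are held fixed.

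The main subtlety---and the step that forces us to handle the normalization directly---is that increasing $u_j$ raises both the numerator $u_j$ and the denominator $\|u\|_p$, so one must show the numerator wins. I would isolate the dependence on $u_j$ by setting $C \triangleq \sum_{\ell \neq j} u_\ell^p$, which is strictly positive whenever $\numb \geq 2$ (using $u_\ell > 0$), and writing $\wvec^*_j = g(u_j)$ with $g(t) = t/(t^p + C)^{1/p}$ for $t > 0$. Rather than differentiate, I would raise to the $p$-th power: $g(t)^p = t^p/(t^p + C) = 1/(1 + C\,t^{-p})$. Since $C > 0$ and $p \geq 1$, the quantity $C\,t^{-p}$ is strictly decreasing in $t$, so $g(t)^p$ is strictly increasing in $t$; because $g(t) > 0$ and $x \mapsto x^{1/p}$ is strictly increasing, $g$ itself is strictly increasing. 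Equivalently, one checks $\partial \wvec^*_j/\partial u_j = \|u\|_p^{-1}\bigl(1 - u_j^p/\|u\|_p^p\bigr) > 0$, the sign again coming from $u_j^p < \|u\|_p^p$.

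Combining the two steps closes the argument: $\wvec^*_j$ is strictly increasing in $u_j$, and $u_j$ is strictly increasing in $\vf_j$ and strictly decreasing in $\Sigma_{jj}$, so $\wvec^*_j$ is monotonically increasing in $\vf_j$ and monotonically decreasing in $\Sigma_{jj}$. The only place care is needed is the degenerate single-behavior case $\numb = 1$, where $C = 0$ forces $\wvec^*_j \equiv 1$ and the statement holds vacuously as a constant; for the relevant case $\numb \geq 2$ the strict positivity of $C$ makes the monotonicity strict. I expect the genuine obstacle to be exactly this normalization coupling, which the $g(t)^p$ reformulation dispatches cleanly without any messy differentiation.
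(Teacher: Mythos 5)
Your proposal is correct and follows essentially the same route as the paper: both reduce the claim to monotonicity of $\zvec_j/\|\zvec\|_p$ in the single scalar $\zvec_j = \vf_j/\Sigma_{jj}$, and your ``equivalent'' derivative check $\partial \wvec^*_j/\partial \zvec_j = (\|\zvec\|_p^p - \zvec_j^p)/\|\zvec\|_p^{p+1} > 0$ is exactly the paper's computation (done there via a small lemma on the derivative of the $p$-norm). Your $g(t)^p = 1/(1 + C\,t^{-p})$ reformulation is a clean differentiation-free variant, and your handling of the $\numb = 1$ edge case and the strictness for $\numb \geq 2$ are minor refinements the paper does not spell out, but the substance of the argument is identical.
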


User utility under the optimal weight vector also increases in value-faithfulness and decreases in variance. Under any weight vector, increasing the value-faithfulness or decreasing the variance of a behavior will increase the probability that the higher-valued producer is ranked first, i.e., user utility. Since this is true for \emph{any} weight vector, it is also true under the optimal weight vector.

\begin{theorem} \label{thm:opt-user-util}
    Let $\mathcal{U}_{\text{user}}(\mathbf{0}, \mathbf{0}; \wvec^*(\vf, \Sigma))$ be user utility under the user-optimal weight vector $\wvec^*$. For any behavior, $j \in [\numb]$, optimal user utility $\mathcal{U}_{\text{user}}(\mathbf{0}, \mathbf{0}; \wvec^*(\vf, \Sigma))$ monotonically increases in the behavior's value-faithfulness $\vf_j$ and monotonically decreases in the behavior's variance $\Sigma_{jj}$.
\end{theorem}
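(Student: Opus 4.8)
The plan is to reduce user utility to an explicit scalar expression and then read off monotonicity. Setting $\eff = \mathbf{0}$, the expected score gap is $\ex[\wvec^\tr\yvec(1) - \wvec^\tr\yvec(-1)] = \wvec^\tr\vf$, and the noise difference $\noise(\wvec) = \wvec^\tr\evec(-1) - \wvec^\tr\evec(1)$ is zero-mean Gaussian. Since $\evec(1)$ and $\evec(-1)$ are independent draws from $\mathcal{N}(\mathbf{0},\Sigma)$, its variance is $2\,\wvec^\tr\Sigma\wvec$, so $\ndist_\noise$ is the CDF of $\mathcal{N}(0, 2\,\wvec^\tr\Sigma\wvec)$ and
\[
\mathcal{U}_{\text{user}}(\mathbf{0},\mathbf{0};\wvec) = \Phi\!\left(\frac{\wvec^\tr\vf}{\sqrt{2\,\wvec^\tr\Sigma\wvec}}\right),
\]
where $\Phi$ is the standard normal CDF. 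Because $\Phi$ is strictly increasing and the argument is scale-invariant in $\wvec$, maximizing utility is equivalent to maximizing the ratio $\wvec^\tr\vf/\sqrt{\wvec^\tr\Sigma\wvec}$, which is precisely the problem solved by \Cref{thm:user-w}.

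My main route is then direct substitution of the optimizer $\wvec^* = \Sigma^{-1}\vf/\|\Sigma^{-1}\vf\|_p$. Using that $\Sigma$ is diagonal, both $\wvec^{*\tr}\vf$ and $\wvec^{*\tr}\Sigma\wvec^*$ reduce to $\vf^\tr\Sigma^{-1}\vf$ (the norm constant cancels in the ratio), giving the closed form
\[
\mathcal{U}_{\text{user}}(\mathbf{0},\mathbf{0};\wvec^*(\vf,\Sigma)) = \Phi\!\left(\sqrt{\tfrac{1}{2}\,\vf^\tr\Sigma^{-1}\vf}\right) = \Phi\!\left(\sqrt{\tfrac{1}{2}\textstyle\sum_{k}\vf_k^2/\Sigma_{kk}}\right).
\]
Monotonicity is then immediate: since $\vf_j>\mathbf{0}$, the summand $\vf_j^2/\Sigma_{jj}$ strictly increases in $\vf_j$ and strictly decreases in $\Sigma_{jj}$; the square root preserves both, and $\Phi$ is strictly increasing. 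Hence optimal utility strictly increases in $\vf_j$ and strictly decreases in $\Sigma_{jj}$.

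An alternative, which I expect mirrors the intended argument, avoids the closed form via an envelope/max-preservation observation. Writing $U(\wvec;\vf,\Sigma)$ for the display above and noting $\mathcal{U}^*_{\text{user}}(\vf,\Sigma) = \max_{\wvec\geq\mathbf{0}} U(\wvec;\vf,\Sigma)$, one checks that for \emph{any} fixed feasible $\wvec\geq\mathbf{0}$, $U$ is nondecreasing in $\vf_j$ (numerator coefficient $\wvec_j\geq 0$) and nonincreasing in $\Sigma_{jj}$ (denominator coefficient $\wvec_j^2\geq 0$, with $\wvec^\tr\vf\geq 0$). Then for $\vf_j'>\vf_j$,
\[
\mathcal{U}^*_{\text{user}}(\vf',\Sigma) \geq U(\wvec^*(\vf,\Sigma);\vf',\Sigma) \geq U(\wvec^*(\vf,\Sigma);\vf,\Sigma) = \mathcal{U}^*_{\text{user}}(\vf,\Sigma),
\]
and symmetrically for $\Sigma_{jj}$; the first inequality is optimality of $\wvec^*(\vf')$, the second is per-$\wvec$ monotonicity at the old optimizer. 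The only nonroutine point — and the step I would be most careful about — is \emph{strictness}: the middle inequality is strict exactly because every coordinate $\wvec^*_j(\vf,\Sigma) = \Sigma_{jj}^{-1}\vf_j/\|\Sigma^{-1}\vf\|_p$ is positive, which holds since $\vf>\mathbf{0}$ and $\Sigma\succ 0$. Everything else (the Gaussian variance bookkeeping with the factor of $2$ and the independence of the two noise vectors) is routine.
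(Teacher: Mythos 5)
Your proposal is correct, and your \emph{primary} route is genuinely different from the paper's. The paper proves the theorem by exactly your alternative argument: it writes $\mathcal{U}_{\text{user}}(\mathbf{0},\mathbf{0};\wvec) = \frac{1}{2}\left[1+\mathrm{erf}\left(\wvec^\tr\vf/(2\sqrt{\wvec^\tr\Sigma\wvec})\right)\right]$, observes that for \emph{any} fixed $\wvec \in \mathbb{R}^\numb_{\geq 0}$ this is increasing in $\vf_j$ and decreasing in $\Sigma_{jj}$, and concludes that the same holds under the optimal weight vector --- which is precisely your two-inequality sandwich $\mathcal{U}^*_{\text{user}}(\vf',\Sigma) \geq U(\wvec^*(\vf,\Sigma);\vf',\Sigma) \geq U(\wvec^*(\vf,\Sigma);\vf,\Sigma)$, though the paper leaves the sandwich implicit and never discusses strictness. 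Your main route instead substitutes the optimizer from \Cref{thm:user-w} into the scale-invariant objective: the ratio $\wvec^\tr\vf/\sqrt{\wvec^\tr\Sigma\wvec}$ evaluated at $\wvec^* = \Sigma^{-1}\vf/\|\Sigma^{-1}\vf\|_p$ equals $\sqrt{\vf^\tr\Sigma^{-1}\vf}$ (the normalizer cancels), giving the correct closed form $\mathcal{U}^*_{\text{user}} = \Phi\left(\sqrt{\tfrac{1}{2}\sum_{k}\vf_k^2/\Sigma_{kk}}\right)$, with the factor of $2$ from the variance $2\wvec^\tr\Sigma\wvec$ of the noise difference handled correctly and consistently with the paper's $\mathrm{erf}$ expression. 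What each approach buys: the paper's envelope-style argument is more robust, since it needs no formula for the optimizer and would survive any change to the feasible set of weights (so long as it does not depend on $\vf$ or $\Sigma$); your closed form is sharper, since it delivers \emph{strict} monotonicity immediately (the envelope argument alone yields only weak monotonicity unless, as you correctly note, one invokes $\wvec^*_j > 0$, which follows from $\vf > \mathbf{0}$ and $\Sigma \succ 0$), it quantifies optimal utility via the signal-to-noise quantity $\vf^\tr\Sigma^{-1}\vf$, and it makes the per-behavior contribution $\vf_j^2/\Sigma_{jj}$ explicit --- information the paper's proof does not expose.
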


In \Cref{fig:vf-var-tradeoff}, using the closed-form expression for the user-optimal weight vector from \Cref{thm:user-w}, we simulate how the user-optimal weight on a behavior is affected by its value-faithfulness and variance. There are two different behaviors and the weight vector $\wvec \in \mathbb{R}^2_{\geq 0}$ on these behaviors is normalized so that $||\wvec||_1 = 1$. The exact parameters used for all simulations can be found in \Cref{app:sim}. As implied by our theoretical results, the user-optimal weight monotonically increases as a behavior's value-faithfulness increases and monotonically decreases as its variance increases.

\begin{figure}[tb]
  \centering   \includegraphics[width=0.35\textwidth]{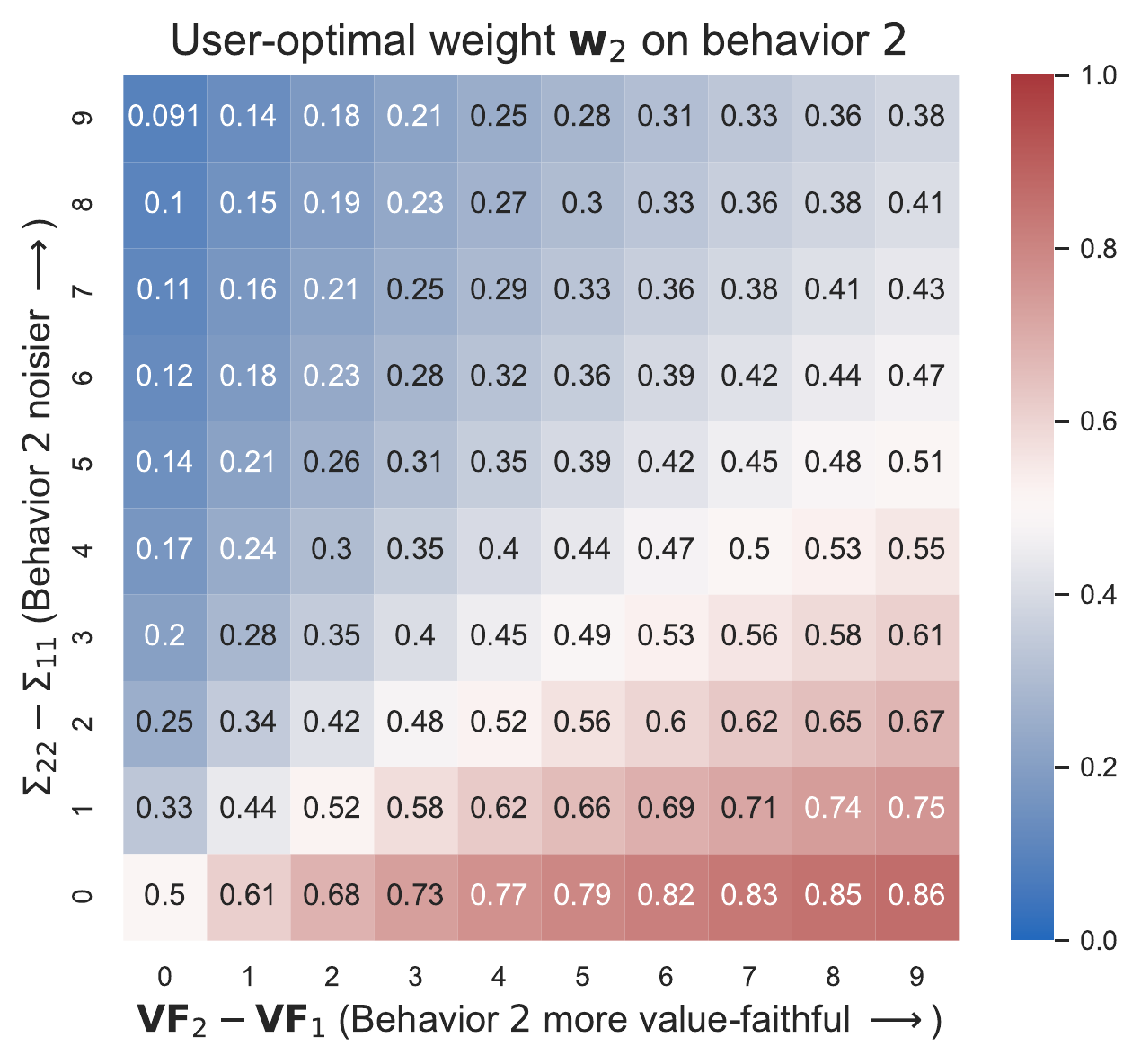}
    \caption{The optimal user weight vector as a function of value-faithfulness and variance. The optimal weight $\wvec_2$ on the second behavior increases as its value-faithfulness increases and decreases as its variance increases.}
    \label{fig:vf-var-tradeoff}
\end{figure}

\begin{figure*}[tb]
    \centering
    \includegraphics[width=0.75\textwidth]{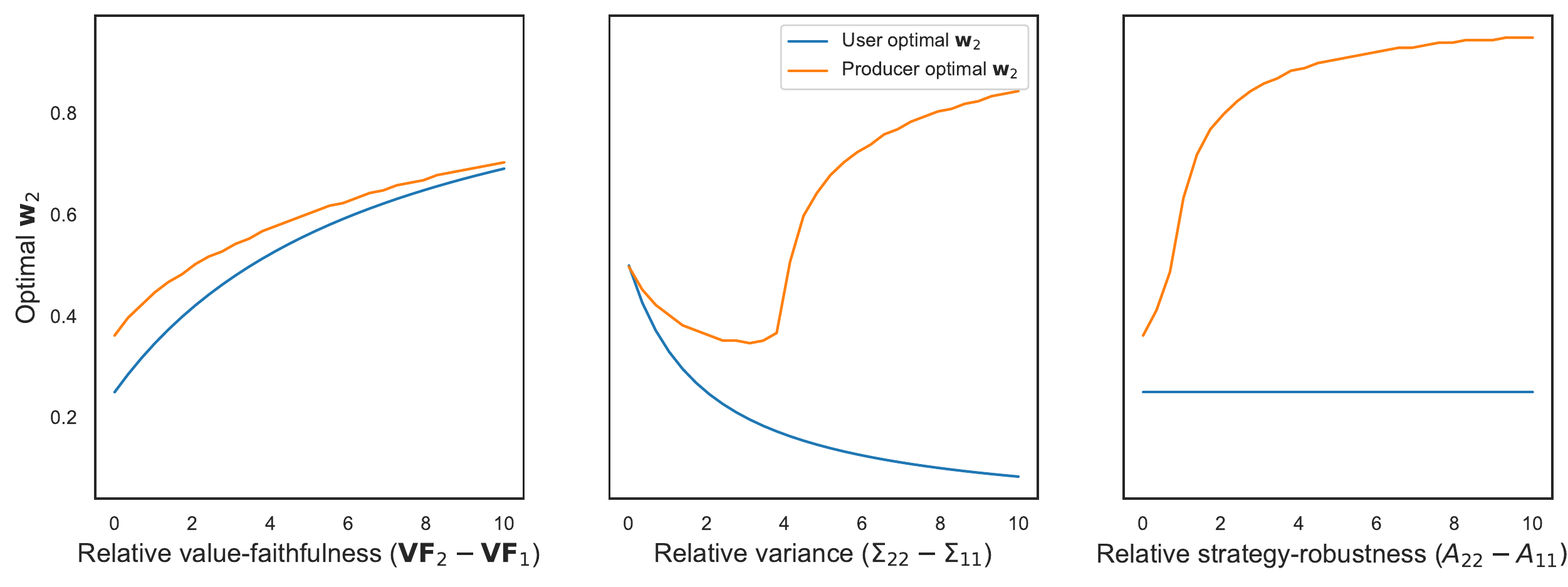}
    \caption{The user-optimal and producer-optimal weight vector as a function of three aspects of behavior: value-faithfulness, variance, and strategy-robustness.}
    \label{fig:main}
\end{figure*}

\section{User and Producer Welfare Under Strategic Adaptation} \label{sec:producers}
In this section, we analyze user utility and producer welfare under the full model described in Section \ref{sec:model}. Omitted proofs can be found in \Cref{app:prod-proofs}. Given the weights $\wvec$ chosen by the designer, producers strategically exert effort $\eff \in \mathbb{R}^\numb_{\geq 0}$ into increasing the probability that the user engages with their item through each behavior. The choice of weights must balance between three aspects of behavior at once: value-faithfulness, strategy-robustness, and noisiness.

In \Cref{prop:eq}, we derive the unique equilibrium strategy for producers and find that the strategy is symmetric, i.e., both producers exert the same effort at equilibrium. 

\begin{prop}[Equilibrium] \label{prop:eq}
The unique equilibrium strategy for both producers is $\eff^*(1) = \eff^*(-1) = f_\epsilon(\wvec^\tr \vf)A^{-1}\wvec$ where $f_\epsilon$ is the density of the difference in noise terms $\epsilon(\wvec) \sim \mathcal{N}(0, 2 \wvec^\tr \Sigma \wvec)$.
\end{prop}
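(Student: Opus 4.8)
The plan is to pass to first-order conditions after collapsing each producer's $\numb$-dimensional effort to a single scalar, exploit the fact that the two producers' first-order conditions are identical to force a symmetric solution, and finally confront global optimality, which is where the bell-shaped density makes the argument delicate. First I would record the noise distribution: since $\evec(1),\evec(-1)\sim\mathcal{N}(\mathbf{0},\Sigma)$ are independent, $\noise(\wvec)=\wvec^\tr\evec(-1)-\wvec^\tr\evec(1)$ is a difference of independent Gaussians, hence $\noise(\wvec)\sim\mathcal{N}(0,2\wvec^\tr\Sigma\wvec)$ with density $f_\noise$ and CDF $\ndist_\noise$. Writing $D=\wvec^\tr\vf+\wvec^\tr(\eff(1)-\eff(-1))$ for the expected score gap, producer $1$ maximizes $\ndist_\noise(D)-\tfrac12\eff(1)^\tr A\eff(1)$ and producer $-1$ maximizes $1-\ndist_\noise(D)-\tfrac12\eff(-1)^\tr A\eff(-1)$.

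Next I would reduce to one dimension. Since $D$ depends on $\eff(i)$ only through the scalar $\wvec^\tr\eff(i)$, and since minimizing $\tfrac12\eff^\tr A\eff$ subject to a fixed value of $\wvec^\tr\eff$ yields an effort proportional to $A^{-1}\wvec$ — which is automatically nonnegative because $A$ is diagonal and positive and $\wvec\geq\mathbf{0}$, so the nonnegativity constraint is slack — each producer's optimal effort has the form $\eff(i)=s_iA^{-1}\wvec$ for a scalar $s_i\geq0$, with cost $\tfrac12 s_i^2\kappa$ where $\kappa:=\wvec^\tr A^{-1}\wvec>0$. Differentiating, both producers' first-order conditions read $s_i=f_\noise(D)$, because producer $-1$'s benefit term contributes $\tfrac{\partial}{\partial s_{-1}}\bigl(1-\ndist_\noise(D)\bigr)=\kappa f_\noise(D)$, matching producer $1$'s $\kappa f_\noise(D)$.

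For uniqueness I would argue as follows, and notably this step needs no concavity. Because $f_\noise>0$ everywhere, the marginal value of effort at $s_i=0$ equals $\kappa f_\noise(D)>0$ for each producer, so the boundary is never a local maximum; since each objective is continuous on $[0,\infty)$ and tends to $-\infty$, each best response is an interior critical point and therefore satisfies $s_i=f_\noise(D)$. The two conditions share an identical right-hand side $f_\noise(D)$, forcing $s_1=s_{-1}$; then $D=\wvec^\tr\vf$ and $s_1=s_{-1}=f_\noise(\wvec^\tr\vf)=:s^*$, i.e. $\eff^*(1)=\eff^*(-1)=f_\noise(\wvec^\tr\vf)A^{-1}\wvec$. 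Hence any equilibrium must be this symmetric profile, giving at most one equilibrium and establishing the stated form.

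The main obstacle is existence — confirming that $s^*$ is a \emph{global}, not merely local, best response to $s^*$, since $\ndist_\noise$ is a Gaussian CDF and $f_\noise$ is bell-shaped, so the scalar objective $g(s_1)=\ndist_\noise\bigl(\wvec^\tr\vf+(s_1-s^*)\kappa\bigr)-\tfrac12 s_1^2\kappa$ need not be globally concave. Its second derivative $\kappa^2 f_\noise'(D)-\kappa$ is negative exactly where $D\geq0$ (using $f_\noise'<0$ on the positive axis), so local concavity at $s^*$, namely $g''(s^*)=\kappa^2 f_\noise'(\wvec^\tr\vf)-\kappa<0$, is immediate; but $g''$ can turn positive on the disadvantaged region $D<0$, where a producer trailing its rival faces increasing marginal returns to effort. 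On $s_1\geq0$ one has $D\geq\wvec^\tr\vf-s^*\kappa$, so $g$ is globally concave — and $s^*$ its unique maximizer — precisely when $\wvec^\tr\vf\geq\kappa f_\noise(\wvec^\tr\vf)$. I would establish this condition (it is essentially necessary: otherwise, when the rival plays $s^*$, deviating to near-zero effort sacrifices almost nothing in ranking probability while saving the full cost $\tfrac12(s^*)^2\kappa$, which becomes profitable once manipulation is cheap enough, i.e. $\kappa$ large) and then conclude that the symmetric profile is the unique pure Nash equilibrium.
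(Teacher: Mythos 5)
Your necessity-and-uniqueness argument is the same as the paper's, just repackaged: the paper writes the vector first-order conditions $f_\noise(D)\wvec - A\,\eff(i) = 0$ for both producers (with $D = \wvec^\tr\vf + \wvec^\tr\eff(1) - \wvec^\tr\eff(-1)$), subtracts them to force $\eff(1)=\eff(-1)$, and substitutes to get $\eff^*(i) = f_\noise(\wvec^\tr\vf)A^{-1}\wvec$. Your reduction to the ray $\{s\,A^{-1}\wvec : s \geq 0\}$ is a clean equivalent (the vector FOC itself already forces $\eff(i) \propto A^{-1}\wvec$), and your boundary argument --- $f_\noise > 0$ makes the marginal benefit at zero effort strictly positive, and coercivity puts any best response at an interior critical point --- is a legitimate way to justify that every equilibrium satisfies the FOCs, a step the paper takes implicitly. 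Up to this point your plan and the paper's proof coincide.

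The divergence is sufficiency, and here your instincts are sharper than your plan. The paper verifies only the second-order condition at the symmetric point: the Hessian $f'_\noise(\wvec^\tr\vf)\wvec\wvec^\tr - A$ is negative-definite because $\wvec^\tr\vf > 0$ implies $f'_\noise(\wvec^\tr\vf) < 0$. That is a \emph{local} check, and you correctly identify what it leaves open: a trailing producer faces a convex region of $\ndist_\noise$ when $D<0$, so a large downward deviation (slacking to near-zero effort against an opponent playing $s^* = f_\noise(\wvec^\tr\vf)$) can be profitable; indeed when $\kappa = \wvec^\tr A^{-1}\wvec$ is large relative to the noise scale, such a deviation strictly gains and no pure equilibrium exists, exactly as in the classic rank-order tournament literature. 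The gap in your proposal is the final sentence: the condition $\wvec^\tr\vf \geq \kappa f_\noise(\wvec^\tr\vf)$ cannot be ``established'' --- it is not implied by the model's assumptions (diagonal positive $A$, arbitrary diagonal $\Sigma$, $\vf > \mathbf{0}$, $\|\wvec\|=1$) and genuinely fails for cheap manipulation or low noise, so it must be \emph{imposed} as a parameter restriction, at which point you have proved a conditional version of \Cref{prop:eq} rather than the unconditional statement (which, strictly, the paper's local argument does not fully prove either). Two smaller corrections: $g''(s_1) = \kappa^2 f'_\noise(D) - \kappa$ is negative not ``exactly where $D \geq 0$'' but also wherever $\kappa f'_\noise(D) < 1$; using $\sup_x f'_\noise(x) = 1/(\sigma^2\sqrt{2\pi e})$ with $\sigma^2 = 2\wvec^\tr\Sigma\wvec$, the alternative sufficient condition $\kappa \leq \sigma^2\sqrt{2\pi e}$ gives global concavity everywhere and is incomparable to yours. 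Likewise ``essentially necessary'' overclaims for your condition, since it is only sufficient; the correct statement is the limiting one you sketch (profitable slack-off for $\kappa$ large). Net assessment: same route as the paper through the FOC/symmetry step; a more honest treatment of global optimality that exposes a real lacuna in the paper's own proof, but one that your plan closes only by adding an assumption, not by proof.
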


Symmetric equilibria are commonly seen in the literature on \emph{contests} in which agents exert effort towards attaining outcomes that are allocated based on relative rank~\citep{olszewski2016large,bodoh2018college,liu2022strategic}. The intuition for the symmetry in our setting is that producer utility is linear in the probability of being ranked first (which is zero-sum between producers) and the cost of manipulation (where the cost function is the same between producers). Thus, if one producer had found it valuable to expend effort to improve their ranking probability (at the expense of the other), then the other producer would equally have found it valuable to do the same. Hence, they have to have the same effort at equilibrium.

Since the equilibrium is symmetric, the probability that producer one is ranked first is the same as in the non-strategic setting, as producer effort cancels out. Consequently, for users, both the optimal weight vector and user utility under the optimal weight vector remain the same in the strategic setting.  Thus, for users, it is better to up-weigh value-faithful behaviors and down-weigh noisy behaviors (and strategy-robustness has no impact on the optimal weight vector). (This result would not hold in a model in which effort is \textit{productive}, improving producer quality for the user.)

\begin{cor} \label{cor:user-strategy}
Even with strategic adaptation, the user-optimal weight vector is the same as in ~\Cref{thm:user-w}. As in the non-strategic setting (\Cref{thm:opt-user-util}), for any behavior $j$, user utility under the user-optimal weight vector is monotonically increasing in value-faithfulness $\vf_j$, monotonically decreasing in noisiness $\Sigma_{jj}$, and is constant in strategy-robustness $A_{jj}$.
\end{cor}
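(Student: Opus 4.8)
The plan is to leverage the symmetry of the equilibrium established in \Cref{prop:eq} to show that the user utility, viewed as a function of the weight vector, is \emph{identical} to the non-strategic one; the corollary then follows immediately from the results already proved for that setting.

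First I would substitute the equilibrium efforts into the expression for user utility. Recall that $\mathcal{U}_{\text{user}}(\eff(1),\eff(-1);\wvec) = \pr_{\wvec}(R(1)=1 \mid \eff) = \ndist_\noise\bigl(\ex[\wvec^\tr\yvec(1) - \wvec^\tr\yvec(-1)]\bigr)$, and that $\yvec(i) = v(i) + \bias(i) + \evec(i) + \eff(i)$ with $\ex[\evec(i)] = \mathbf{0}$. Taking expectations, the score gap is $\wvec^\tr\vf + \wvec^\tr(\eff(1) - \eff(-1))$, where the first term is exactly the non-strategic gap. By \Cref{prop:eq} the unique equilibrium is symmetric, $\eff^*(1) = \eff^*(-1)$, so the effort term vanishes and the gap collapses to $\wvec^\tr\vf$ for \emph{every} $\wvec$. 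Hence the equilibrium user utility equals $\ndist_\noise(\wvec^\tr\vf)$, which coincides pointwise with the $\eff \equiv \mathbf{0}$ utility studied in \Cref{sec:users}.

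Because the two utility functions of $\wvec$ agree over the whole feasible set $\{\wvec : \|\wvec\|_p = 1\}$, their maximizers coincide: the user-optimal weight vector is therefore $\wvec^* = (\Sigma^{-1}\vf)/\|\Sigma^{-1}\vf\|_p$ from \Cref{thm:user-w}, and the value at the optimum is the same function of $(\vf,\Sigma)$ as in \Cref{thm:opt-user-util}. The monotonicity claims in $\vf_j$ and $\Sigma_{jj}$ then transfer verbatim. For strategy-robustness I would observe that the cost matrix $A$ enters only through the \emph{magnitude} $f_\epsilon(\wvec^\tr\vf)A^{-1}\wvec$ of the common equilibrium effort; since that common effort cancels in the score gap, the equilibrium user utility carries no dependence on $A$ at all, giving constancy in $A_{jj}$.

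The argument is short precisely because \Cref{prop:eq} does the heavy lifting; the only point that requires care is the logical step that equality of the two utility functions \emph{as functions of $\wvec$} (not merely at a single point) is what licenses transferring both the argmax and the comparative statics. I would make this explicit by noting that although the equilibrium effort itself depends on $\wvec$, it remains symmetric for every $\wvec$, so the cancellation holds uniformly over the feasible set and the entire optimization problem is unchanged.
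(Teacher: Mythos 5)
Your proposal is correct and takes essentially the same route as the paper's proof: both invoke the symmetry of the unique equilibrium from \Cref{prop:eq} to conclude that the effort terms cancel in the expected score gap, so equilibrium user utility coincides pointwise (for every feasible $\wvec$) with the non-strategic utility $\ndist_\noise(\wvec^\tr\vf)$, and the optimal weight vector and comparative statics from \Cref{thm:user-w} and \Cref{thm:opt-user-util} transfer directly. Your write-up is merely more explicit than the paper's three-sentence argument, in particular about why agreement \emph{as functions of $\wvec$} over the whole constraint set (rather than at a single point) is what justifies transferring the argmax, and about why the equilibrium utility carries no dependence on $A$.
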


At equilibrium, the probability that each producer is ranked first is the same as it would be if neither producer exerted any effort. Thus, a producer's effort is essentially wasted effort but is required in order to ``keep up'' with their competitor. Hence, the weight vector that maximizes producer welfare is the one that most disincentives manipulation among producers. Unlike the user-optimal weight vector, it is difficult to find a closed-form solution for the producer-optimal weight vector.  The following proposition shows that the optimal weight vector for producers is the solution to a non-convex optimization problem. Solving for the weight vector requires minimizing the product of a convex quadratic form ($\wvec^\tr A^{-1} \wvec$) and a non-convex term (the Gaussian density $f_\noise(\wvec^\tr \vf)$).
\begin{prop} \label{prop:prod-vector}
    The weight vector that maximizes producer welfare at equilibrium is 
    \begin{align}
        \wvec^* \in \argmin_{\wvec : ||\wvec||=1} f_\noise(\wvec^\tr \vf)^2 \wvec^\tr A^{-1}\wvec
    \end{align}
    where $f_\noise$ is the density of the difference in noise terms \\ $\noise(\wvec) \sim \mathcal{N}(0, 2\wvec^\tr \Sigma \wvec)$.
\end{prop}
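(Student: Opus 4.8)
The plan is to reduce the designer's welfare-maximization to the stated minimization by substituting the closed-form equilibrium from \Cref{prop:eq} into the welfare expression \eqref{eq:prod-welfare} and then simplifying the resulting quadratic form. Since the designer chooses $\wvec$ anticipating that the producers play the unique equilibrium, the relevant objective is $\welf(\eff^*(1), \eff^*(-1); \wvec)$ viewed as a function of $\wvec$ alone.

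First I would exploit the symmetry of the equilibrium. By \Cref{prop:eq}, $\eff^*(1) = \eff^*(-1) = f_\noise(\wvec^\tr\vf)\, A^{-1}\wvec$, so the two cost terms in \eqref{eq:prod-welfare} coincide and
\begin{align}
\welf^* = \frac{1}{2} - \frac{1}{2}\sum_i \cost(\eff^*(i)) = \frac{1}{2} - \cost(\eff^*(1)).
\end{align}
Next I would evaluate the quadratic cost at the equilibrium effort. Because $A$ is diagonal, it is symmetric and $(A^{-1})^\tr = A^{-1}$, so the $A$-factor collapses cleanly:
\begin{align}
\cost(\eff^*(1)) &= \tfrac{1}{2}\,\eff^*(1)^\tr A\,\eff^*(1) \\
&= \tfrac{1}{2}\, f_\noise(\wvec^\tr\vf)^2\, \wvec^\tr A^{-1} A A^{-1}\wvec \\
&= \tfrac{1}{2}\, f_\noise(\wvec^\tr\vf)^2\, \wvec^\tr A^{-1}\wvec.
\end{align}

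Substituting back yields $\welf^* = \tfrac{1}{2} - \tfrac{1}{2} f_\noise(\wvec^\tr\vf)^2 \wvec^\tr A^{-1}\wvec$, i.e., a constant minus one half of the target product. Maximizing $\welf^*$ over the constraint set $\{\wvec : \|\wvec\| = 1\}$ is therefore equivalent to minimizing $f_\noise(\wvec^\tr\vf)^2 \wvec^\tr A^{-1}\wvec$, which is exactly the claimed characterization.

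There is no genuine analytic obstacle here; the argument is a direct substitution followed by matrix simplification. The only points requiring care are (i) correctly using the symmetry of $A^{-1}$ when collapsing $A^{-1} A A^{-1}$ to $A^{-1}$, and (ii) recording that a minimizer over the compact sphere exists by continuity of $\wvec \mapsto f_\noise(\wvec^\tr\vf)^2 \wvec^\tr A^{-1}\wvec$ — the Gaussian density $f_\noise$ is continuous and $\wvec^\tr\Sigma\wvec > 0$ on the sphere — even though the objective is non-convex, so the minimizer need not be unique nor available in closed form. This non-convexity is precisely what \Cref{prop:prod-vector} flags as distinguishing the producer-optimal weights from the user-optimal weights of \Cref{thm:user-w}.
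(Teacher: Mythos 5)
Your proposal is correct and follows essentially the same route as the paper's proof: substitute the symmetric equilibrium effort from \Cref{prop:eq} into the welfare expression, simplify the quadratic cost using $\wvec^\tr A^{-1} A A^{-1}\wvec = \wvec^\tr A^{-1}\wvec$, and observe that maximizing $\tfrac{1}{2} - \tfrac{1}{2} f_\noise(\wvec^\tr\vf)^2 \wvec^\tr A^{-1}\wvec$ is equivalent to the stated minimization. Your added remarks on the symmetry of $A^{-1}$ and on existence of a minimizer over the compact sphere are sound elaborations of steps the paper leaves implicit.
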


We can, however, characterize how producer welfare at equilibrium under the optimal weight vector changes as the three aspects of behavior---value-faithfulness, strategy-robustness, and noisiness---change (\Cref{thm:prod-welf}) We find that producer welfare under the optimal weight vector increases as strategy-robustness and value-faithfulness increase: both strategy-robustness and value-faithfulness disincentivize producer manipulation; strategy-robustness does so directly, and value-faithfulness does so by making the gap between the producers' pre-manipulation scores larger. On the other hand, the relationship between noisiness and producer welfare is non-monotonic. Intuitively, when noise is very high, producer welfare is high because manipulation is disincentivized since the ranking outcome is primarily determined by randomness rather than their scores. Conversely, when noise is very low, producer welfare is also high because producers gain little from any incremental increase in their score, so manipulation is also disincentivized.

\begin{theorem}\label{thm:prod-welf}
Let  $\mathcal{W}_{\text{prod}}^*(\vf, \Sigma, A)$ be the optimal producer welfare given the exogenous parameters for value-faithfulness $\vf$, variance $\Sigma$, and strategy-robustness $A$, i.e., 
    \begin{align}
        & \mathcal{W}_{\text{prod}}^*(\vf, \Sigma, A)\\
        & = \max_{\wvec} \mathcal{W}_{\text{prod}}(\eff_\wvec^*(1), \eff_\wvec^*(-1); \vf, \Sigma, A, \wvec)
    \end{align}
    where $\eff_\wvec^*(1)$ and $\eff_\wvec^*(-1)$ are the unique equilibrium strategies in response to $\wvec$.

 For any behavior, $j \in [k]$, the optimal producer welfare \\ $\mathcal{W}_{\text{prod}}^*(\vf, \Sigma, A)$ is monotonically increasing in the behavior's strategy robustness $A_{jj}$, monotonically increasing in its value-faithfulness $\vf_j$, and is not necessarily monotonic in its variance $\Sigma_{jj}$.

 Furthermore, at the limit, as any of strategy robustness $A_{jj}$, value-faithfulness $\vf_j$, or variance $\Sigma_{jj}$ go to infinity, producer welfare under the optimal weight vector $\mathcal{W}_{\text{prod}}^*(\vf, \Sigma, A)$ reaches the maximum possible value, i.e.,
 \begin{align}
 \lim_{z \rightarrow \infty} \mathcal{W}_{\text{prod}}^*(\vf, \Sigma, A) = 1/2
 \end{align}
 for $z \in \{A_{jj}, \vf_j, \Sigma_{jj}\}$.
\end{theorem}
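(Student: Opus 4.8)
The plan is to collapse the max--min defining $\mathcal{W}_{\text{prod}}^*$ into a one-dimensional comparative-statics problem, and then handle the three parameters by two different mechanisms: a monotone envelope argument for $A_{jj}$ and $\vf_j$, and an explicit non-monotone witness for $\Sigma_{jj}$. First I would substitute the symmetric equilibrium effort $\eff^*(1)=\eff^*(-1)=f_\noise(\wvec^\tr\vf)A^{-1}\wvec$ from \Cref{prop:eq} into the welfare identity $\mathcal{W}_{\text{prod}}=\tfrac12-\tfrac12\sum_i c(\eff(i))$ with $c(\eff)=\tfrac12\eff^\tr A\eff$. The two cost terms collapse because $(A^{-1}\wvec)^\tr A(A^{-1}\wvec)=\wvec^\tr A^{-1}\wvec$, and plugging in the Gaussian density of \Cref{prop:prod-vector} gives
\begin{align}
\mathcal{W}_{\text{prod}}^*(\vf,\Sigma,A)=\tfrac12-\tfrac12\min_{\wvec}G(\wvec),\quad G(\wvec)=\frac{\wvec^\tr A^{-1}\wvec}{4\pi\,\wvec^\tr\Sigma\wvec}\exp\!\left(-\frac{(\wvec^\tr\vf)^2}{2\,\wvec^\tr\Sigma\wvec}\right).
\end{align}
The feasible set $\{\wvec\ge\mathbf{0}:\|\wvec\|_p=1\}$ is compact and independent of $(\vf,\Sigma,A)$, and $G$ is continuous and strictly positive there, so the value $V:=\min_\wvec G$ is attained and positive. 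Since $\mathcal{W}_{\text{prod}}^*=\tfrac12-\tfrac12 V$ with $V\ge0$, welfare is at most $1/2$, it increases in a parameter exactly when $V$ decreases in it, and the limit claim is equivalent to $V\to0$.

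For $A_{jj}$ and $\vf_j$ I would verify pointwise monotonicity of $G$ at each fixed feasible $\wvec$. Increasing $A_{jj}$ decreases $\wvec^\tr A^{-1}\wvec=\sum_\ell w_\ell^2/A_{\ell\ell}$ and leaves the other factors untouched, so $\partial G/\partial A_{jj}\le0$; increasing $\vf_j$ raises the margin $\wvec^\tr\vf=\sum_\ell w_\ell\vf_\ell$ (as $w_j\ge0$), and because $\wvec\ge\mathbf{0}$, $\vf>\mathbf{0}$, $\|\wvec\|_p=1$ force $\wvec^\tr\vf>0$, the exponential strictly decreases while the prefactor is unchanged, so $\partial G/\partial \vf_j\le0$. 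Pointwise monotonicity transfers to the minimum by the standard envelope inequality: for $\theta_1<\theta_2$ and $\wvec_1\in\argmin_{\wvec} G(\wvec;\theta_1)$, feasibility of $\wvec_1$ at $\theta_2$ gives $V(\theta_2)\le G(\wvec_1;\theta_2)\le G(\wvec_1;\theta_1)=V(\theta_1)$. Hence $V$ is nonincreasing and $\mathcal{W}_{\text{prod}}^*$ is nondecreasing in both $A_{jj}$ and $\vf_j$.

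The variance direction is where the envelope engine fails, and this is the crux. Writing $s=\wvec^\tr\Sigma\wvec$ and $a=\wvec^\tr\vf$, the map $s\mapsto \tfrac{C}{s}e^{-a^2/(2s)}$ is \emph{unimodal}, rising for $s<a^2/2$ and falling for $s>a^2/2$, so $G$ is not pointwise monotone in $\Sigma_{jj}$ and I genuinely need a witness rather than comparative statics. The cleanest witness is $k=1$, where $\wvec=1$ is forced and $V(\Sigma_{11})=\tfrac{1}{4\pi A_{11}\Sigma_{11}}\exp(-\vf_1^2/(2\Sigma_{11}))$ is exactly this unimodal curve with peak at $\Sigma_{11}=\vf_1^2/2$; consequently $\mathcal{W}_{\text{prod}}^*$ first decreases then increases in $\Sigma_{11}$, establishing non-monotonicity.

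Finally, for the limits I would upper bound $V$ by evaluating $G$ at the standard basis vector $e_j$ (all weight on behavior $j$), giving $V\le G(e_j)=\tfrac{1}{4\pi A_{jj}\Sigma_{jj}}\exp(-\vf_j^2/(2\Sigma_{jj}))$, and then send the relevant coordinate to infinity: as $A_{jj}\to\infty$ the $1/A_{jj}$ factor vanishes, as $\vf_j\to\infty$ the exponential vanishes, and as $\Sigma_{jj}\to\infty$ the $1/\Sigma_{jj}$ factor vanishes (the exponential tending to $1$). In every case $G(e_j)\to0$, so $0\le V\le G(e_j)\to0$ forces $V\to0$ and hence $\mathcal{W}_{\text{prod}}^*\to1/2$ by squeezing. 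I expect the main obstacle to be precisely the variance result: because $G$'s dependence on $\wvec^\tr\Sigma\wvec$ is unimodal, the monotone machinery that dispatches $A_{jj}$ and $\vf_j$ says nothing, and one must take care that the pointwise non-monotonicity is not an artifact cancelled at the optimizing $\wvec$ — which is exactly why I would fall back on the explicit $k=1$ instance rather than attempt a comparative-statics proof.
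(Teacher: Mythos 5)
Your proof is correct, and its skeleton matches the paper's: substitute the symmetric equilibrium from \Cref{prop:eq} to write welfare as $\tfrac12 - \tfrac12 f_\noise(\wvec^\tr\vf)^2\,\wvec^\tr A^{-1}\wvec$, transfer pointwise monotonicity in $A_{jj}$ and $\vf_j$ through the optimum via the envelope inequality, and get the limits by squeezing $\mathcal{W}_{\text{prod}}^*$ between $1/2$ and the welfare at a fixed feasible weight vector. You genuinely depart from the paper in two sub-steps, and in both you are tighter. First, for non-monotonicity in $\Sigma_{jj}$: the paper shows welfare tends to $1/2$ as the induced noise scale $\sigma^2 = 2\wvec^\tr\Sigma\wvec$ goes to $0^+$ or $\infty$, and then argues that monotonicity would force welfare to be constantly $1/2$, which is ``clearly untrue in general.'' That argument slips between $\sigma^2$ and the single coordinate $\Sigma_{jj}$ (sending $\Sigma_{jj}\to 0$ does not send $\sigma^2\to 0$ unless $k=1$ or the other variances vanish) and leaves the non-constancy step unverified. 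Your explicit $k=1$ witness, where $\wvec=1$ is forced and the objective is the unimodal map $s\mapsto \tfrac{C}{s}e^{-a^2/(2s)}$ peaking at $\Sigma_{11}=\vf_1^2/2$, settles the claim by direct computation ($k=1$ is a legitimate instance of the model, and ``not necessarily monotonic'' is an existential claim, so one witness suffices). Second, for the limits, the paper asserts the pointwise limit $1/2$ ``for any fixed weight vector $\wvec$,'' which is in fact false when $\wvec_j=0$ or when $\wvec$ puts weight off behavior $j$ (e.g.\ as $A_{jj}\to\infty$ the cost term $\sum_\ell \wvec_\ell^2/A_{\ell\ell}$ does not vanish); your evaluation at the basis vector $e_j$, feasible under any $p$-norm, is exactly the repair needed, since the squeeze requires only one feasible comparison point whose welfare tends to $1/2$. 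A minor note: your constants are the correct ones, $f_\noise(\wvec^\tr\vf)^2 = \tfrac{1}{4\pi s}e^{-a^2/(2s)}$ with $s=\wvec^\tr\Sigma\wvec$ and $a=\wvec^\tr\vf$; the paper's displayed prefactor $\tfrac{1}{4\sqrt{\pi}\,\wvec^\tr\Sigma\wvec}$ and exponent $-(\wvec^\tr\vf)^2/\wvec^\tr\Sigma\wvec$ contain typos, which are immaterial to the qualitative conclusions.
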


\begin{table*}[tb!]
\centering
\begin{tabular}{|c|c|c|c|c|c|c|c|c|c|c|}
\hline
\textbf{url\_id} & \textbf{url}             & \textbf{title}   & \textbf{tpfc\_rating}     & \textbf{age}   & \textbf{gender} & \textbf{pol} & \textbf{loves} & \textbf{likes} & \textbf{views} & \textbf{\ldots} \\ \hline
1       & yyy.com/a & Sample headline  & false & 25-34 & F      & -2  & 124   & 34    & 21 & \ldots \\ \hline
2       & zzz.com/b & Sample headline  & null & 35-44 & M      & 1   & -2    & 4 & 46    & \ldots \\ \hline
\vdots  & \vdots   & \vdots       & \vdots & \vdots & \vdots & \vdots & \vdots & \vdots & \vdots & $\ddots$ \\ \hline
\end{tabular}
\caption{Example of the Facebook URLs data~\citep{fburls} used for our experiment in \Cref{sec:fb-exp}. For each URL, we have aggregate engagement counts (`loves', `likes', 'views', etc) by demographic groups defined by age, gender, and political leaning (`pol'). For some URLs, we also have whether the URL was fact-checked as true or false by third-party fact-checkers (`tpfc\_rating'). Note that because of the noise added to the dataset for differential privacy, the number of views on an item may be less than the number of likes or other engagements; it is also possible for any of these counts to be negative.}

\label{tab:fb-urls-dataset}
\end{table*}

In \Cref{fig:main}, we simulate how the user-optimal and producer-optimal weight vector is affected by the three aspects of behavior: value-faithfulness, variance, and strategy-robustness. There are two different behaviors and the weight vector $\wvec \in \mathbb{R}^2_{\geq 0}$ on these behaviors is normalized so that $||\wvec||_1 = 1$. The second behavior scores higher on all three aspects by default, and each subplot isolates the impact of changing one aspect while holding the others constant. For producers, the optimal weight is calculated through numerical approximation.  The exact parameters used for the figure can be found in \Cref{app:sim}.

The producer-optimal weight changes in the same way, as described in \Cref{thm:prod-welf}, that the optimal producer welfare changes as a function of these three aspects. For both users and producers, the optimal weight on a behavior for both users and producers increases as its value-faithfulness increases. On the other hand, as the variance of the behavior increases, the producer-optimal weight changes non-monotonically while the user-optimal weight monotonically decreases. Finally, as the behavior's strategy-robustness increases, the user-optimal weight remains constant while the producer-optimal weight monotonically increases and approaches the maximum possible weight. Overall, though the user and producer-optimal weight vector remain similar as value-faithfulness changes, they diverge drastically as variance or strategy-robustness change. 
\begin{figure*}[t]
    \centering
    \includegraphics[width=1\linewidth]{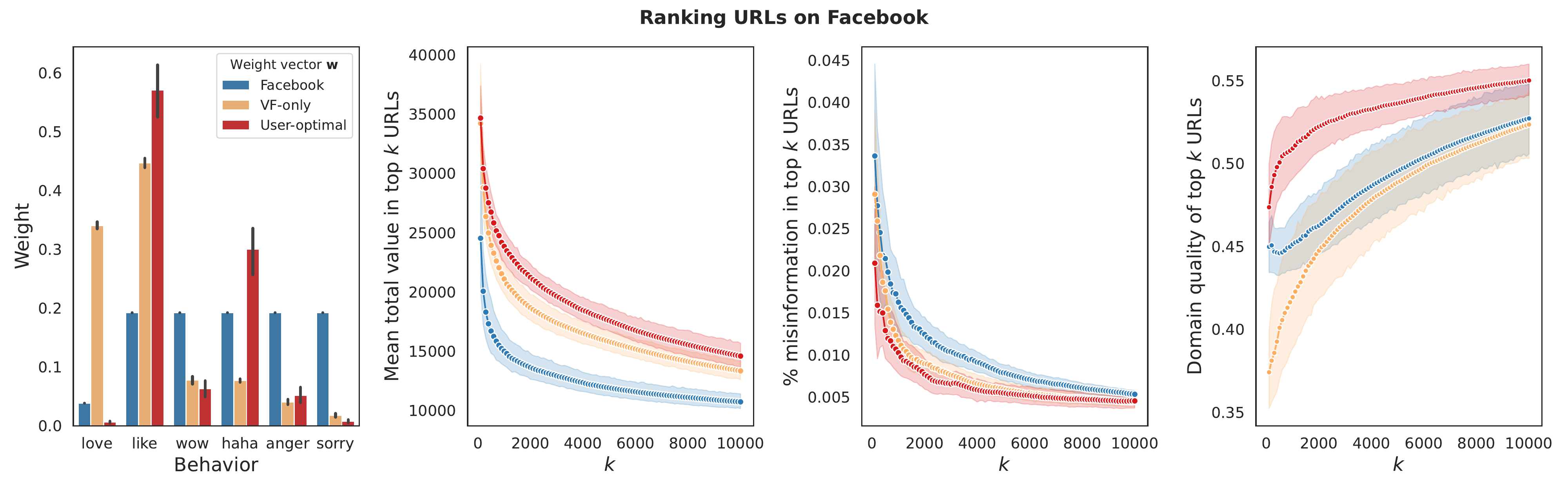}
    \caption[]{The effects of the \texttt{user-optimal}, \texttt{VF-only}, and \texttt{Facebook} weight vector. The \texttt{user-optimal} and \texttt{VF-only} weight vectors were estimated 12 times, each based on data from a different month in 2017\footnotemark. The Facebook weight vector was reported to be unchanged\footnotemark until 2018~\citep{merrill2021five}, and therefore, is constant across the 12 months. The estimated \texttt{user-optimal} and \texttt{VF-only} weight vector from one month of data (e.g. January 2017) was used to rank the next month of URLs (e.g. February 2017). The figure shows the weight vectors (left), the amount of misinformation (center) and domain quality (right) of the top URLs, averaged across months. All error bars represent 95\% bootstrap confidence intervals.}
    \label{fig:fb-exp}
\end{figure*}

\footnotetext{In each month, we filtered to URLs that received at least 100,000 views. On average, this yielded URLs each month, and these URLs accounted for 90\% of the views on URLs in that month.}
\footnotetext{The full Facebook weight vector uses many more behaviors, however, it was reported that the relative weights on these six behaviors did not change until 2018.}

\section{Experiment: Facebook URL recommendation} \label{sec:fb-exp}
We now apply our theoretical insights to the practical challenge of URL recommendation, utilizing a large-scale dataset comprising approximately 70 million URLs shared on Facebook~\citep{fburls}. We compare the performance of the \texttt{user-optimal} weight vector derived from our theoretical analysis, against two baselines: (1) a \texttt{VF-only} weight vector that that focuses solely on value-faithfulness while ignoring variance, and (2) a \texttt{Facebook} weight vector that reflects the actual weighting of behaviors by Facebook. Strikingly, our analysis reveals that the user-optimal weight vector (a) surpasses both baselines in delivering higher user value, even though the \texttt{VF-only} baseline is specifically optimized only for value; (b) \textit{also} enhances broader societal outcomes. Specifically, it contributes to a reduction in misinformation and elevates the quality of the URL domains, outcomes that were not directly targeted in our theoretical framework.

The dataset of URLs that we use is extensive and was released through a collaboration between Facebook and the academic organization Social Science One~\citep{fburls}. It includes all Facebook URLs shared publicly at least 100 times between 2017-2022 along with the aggregate engagement counts by demographic group. The engagement counts have known Gaussian noise added to them for the purposes of differential privacy. Demographic groups are based on age, gender, and political affinity. For example, the dataset might indicate that 143 left-leaning female users aged 18-25 responded to posts with a particular URL using the sad reaction. \Cref{tab:fb-urls-dataset} provides a visual representation of the dataset. For further details on the dataset, please refer to \citet{fburls}. 

We prototype a feature that uses engagement data with URLs to recommend URLs to users on Facebook. On Facebook, there are $k = 6$ different, mutually exclusive ways to react to a post: likes, loves, hahas, wows, sorrys, and angers. We evaluate how the chosen weights on these engagements affect user value and downstream quality of URLs. To simplify our analysis, we consider a nonpersonalized scenario where the predicted probability of engaging with an item using a specific reaction type is the same across users. Nonetheless, our methodology can be easily extended to personalized recommendations by replacing generic predictions with user-specific ones. Finally, we primarily focus on evaluating the impact on users (content consumers), reflecting the theoretical results from \Cref{sec:users} without strategic adaptation. We discuss challenges and implications for content producers in our discussion section (\Cref{sec:disc}).

\paragraph{Empirical overview} To operationalize our framework, we define measures of user value, value-faithfulness, and the prediction noise associated with each behavior according to the user value (\Cref{sec:empdefinevaluef}). Then, we calculate various weight vectors, e.g., an optimal one according to our theory and one that only prioritizes value-faithfulness, using a train time step (\Cref{sec:empcalculatevectors}). Finally, we rank URLs in a future time step according to the various weight vectors (\Cref{sec:emprankurls}), and evaluate the rankings in terms of user value, misinformation, and domain quality (\Cref{sec:empresults}). 

\subsection{Defining value and estimating value-faithfulness and variance}
\label{sec:empdefinevaluef}
We must first define \textit{user value} for a URL. We take an illustrative approach and assume that if a user ``love'' reacted to a URL, then they valued it at unit 1 utility. Then, we say that the value provided by a URL with some other reaction is the population correlation between that reaction and the ``love'' reaction. Accordingly, the \textit{value-faithfulness} of a behavior is also the correlation between it and the ``love'' reaction, e.g., the value-faithfulness for the ``like'' reaction is the correlation between the ``like'' and ``love'' column in \Cref{tab:fb-urls-dataset}. Formally, value-faithfulness of each behavior $i \in [k]$ is 
\begin{align} \label{vf-est}
    \widehat{\vf}_i = \text{corr}(Y_{i}, Y_{\texttt{love}})
\end{align}
where $Y_{i}$ is the random variable that represents a uniformly-drawn sample of the $i$-th behavior across rows of the dataset and $Y_{\texttt{love}}$ is the same for the love behavior.

Then, the true total user value $V_u$ for a URL $u$ at time step $t$ is:
\begin{align} \label{eq:total-value}
V^{t}_u = \sum_{i=1}^{k} y^{t}_{ui} \widehat{\vf}^{t}_i \,,
\end{align}
where $y_{ui}$ is the observed number of times that users reacted to URL $u$ using behavior $i$ and $\widehat{\vf}_i$ is the assumed value-faithfulness of behavior $i$ (\Cref{vf-est}), all measured at time $t$. This value measure can be seen as a simplified version of the principled approach of \citet{milli2021optimizing}, who developed a Bayesian model to determine the value indicated by different behaviors by using one behavior, whose relationship to user value is known, as an anchor (in this case, the love reaction). In practice, the platform itself could also measure value through user surveys (see \Cref{app:measuring-aspects} for more discussion). Indeed, Facebook has conducted surveys to understand how different types of engagement relate to user value~\citep{fbarchive2022}. 

Of course, the platform does not have access to $V^{t}_u$ when ranking URLs. Instead, we say that it ranks URLs according to estimated value faithfulness from a previous time step $\widehat{\vf}^{t-1}_i$, potentially factoring in variance in predicted reaction count $y^{t}_{ui}$. 
%
Next, we describe how, for each behavior $i$, we estimate the variance $\widehat{\Sigma}_{ii}$ associated with the predictions of the behavior. First, we must create a predictor of each behavior from the data. However, the task is complicated by the noise that was added to the dataset for differential privacy. This noise can lead to anomalies where, for example, the number of likes on a URL is lower than the number of views on a URL, or even cases where either of these values is negative. To accurately estimate engagement probabilities while taking into account the differential privacy noise, we employ methods from the statistical literature on errors-in-variables models. The details of our approach can be found in \Cref{app:deming}; in short, given a sample of URLs $\mathcal{S}$, we apply errors-in-variables modeling to derive an estimate $\hat{\beta}_{ui}(\mathcal{S})$ of the probability that users react to URL $u$ with behavior $i$.

The variance $\Sigma_{ii}$ in the prediction of behavior $i$ is then the expected variance of the estimator $\hat{\beta}_{ui}$:
\begin{align} \label{eq:fb-var}
    \Sigma_{ii} = \mathbb{E}[\text{Var}(\hat{\beta}_{ui}(\mathcal{S}))]\,,
\end{align}
where the variance is calculated over the random data sample $\mathcal{S}$ and the expectation is taken across URLs $u$, assuming a uniform random sampling of URLs. To compute our variance estimate $\widehat{\Sigma}_{ii}$, we first sample $n$ URLs. Next, we generate $m$ Bootstrap samples\footnote{In our experiment, $n=1000$ and $m=100$.} from these URLs: $\mathcal{S}_1, \dots, \mathcal{S}_m$. For each URL $u$ and behavior $i$, we calculate the sample variance $\widehat{\text{Var}}_m(\hat{\beta}_{ui})$ of the estimator $\hat{\beta}_{ui}$ using the samples $\mathcal{S}_1, \dots, \mathcal{S}_m$. Then, we average across URLs to get, as an estimate of  \cref{eq:fb-var},
\begin{align}
\widehat{\Sigma}_{ii} = \frac{1}{n} \sum_{u=1}^n \widehat{\text{Var}}_m(\hat{\beta}_{ui}) \,.
\end{align}

\subsection{The user-optimal and baseline weights}
\label{sec:empcalculatevectors}
With the value-faithfulness $\vf$ and variance $\Sigma$ of the different behaviors estimated as above, we apply \Cref{thm:user-w} to calculate the user-optimal weight vector, i.e., 
\begin{align} \label{eq:fb-user-opt}
\wvec^{\texttt{user-optimal}} = (\widehat{\Sigma}^{-1}\widehat{\vf})/\|\widehat{\Sigma}^{-1}\widehat{\vf}|_1\,.
\end{align}
We compare this user-optimal weight vector to two baseline vectors. The first baseline (\texttt{VF-only}) ignores the prediction variance; each behavior's weight is equal to its (normalized) value-faithfulness:
\begin{align} \label{eq:fb-vf-only}
\wvec^{\texttt{VF-only}} = \widehat{\vf}/\|\widehat{\vf}\|_1\,.
\end{align}
Our second baseline $\wvec^{\texttt{Facebook}}$ is based on leaked internal Facebook documents~\citep{merrill2021five} which revealed that when Facebook introduced emoji reactions, they gave all reactions five times the weight of a regular thumbs-up like:
\begin{align} \label{eq:fb-w}
    \wvec^{\texttt{Facebook}}_{i} = \begin{cases}
        1/\alpha & i = \texttt{like} \\
        5/\alpha  & i \neq \texttt{like}
    \end{cases}\,,
\end{align}
where $\alpha=26$ is a normalizer that ensures that $||\wvec^{\texttt{Facebook}}||_1=1$.

\Cref{fig:fb-exp} shows the weight vectors. Notably, the \texttt{user-optimal} vector puts the most weight on the `loves' and `wows' reaction while the \texttt{VF-only} vector puts the most weight on the `loves' and `likes' reaction.

\subsection{Ranking and evaluating the top URLs}
\label{sec:emprankurls}
We use the weight vectors to rank URLs on Facebook in 2017, and evaluate their impact on (1) the total user value generated from the recommended URLs, (2) the amount of misinformation in the recommended URLs, and (3) the quality of the domains recommended.

First, we estimate each weight vector on a per-month basis. Utilizing data from  month $t$, we estimate value-faithfulness $\widehat{\vf}^t$, variance $\widehat{\Sigma}^t$, and the engagement probabilities $\hat{\beta}^t_{ui}$ for that month. For each month $t$, we then calculate the \texttt{user-optimal} and the \texttt{VF-only} weight vectors, $\wvec^{\texttt{user-optimal},t}$ and $\wvec^{\texttt{VF-only}, t}$, by substituting the month-specific estimates of value-faithfulness $\widehat{\vf}^t$ and variance $\widehat{\Sigma}^t$ into \cref{eq:fb-user-opt} and \cref{eq:fb-vf-only}, respectively. The Facebook weight vector was reported to be unchanged\footnotemark until 2018~\citep{merrill2021five}, and therefore, $\wvec^{\texttt{Facebook}, t}$ is constant across the 12 months, as specified in \cref{eq:fb-w}.

We then use weight vectors estimated on data from month $t-1$ to rank URLs in month $t$. In other words,  month $t-1$ serves as the training data for the weight vector, while month $t$ serves as the evaluation data.\footnote{In practice, the score $s_u^{a, t}$ for a URL in month $t$ would also use engagement predictions $\hat{\beta}^{t-1}_{ui}$ based on data from month $t-1$, but since predicting engagement is not our focus, we use the predictions $\hat{\beta}^{t}_{ui}$ based on data from month $t$ in \cref{eq:score}.} In particular, the score $s_u^{a, t}$ for URL $u$ at month $t$ under the weight vector type \\ $a \in \{\texttt{user-optimal}, \texttt{VF-only}, \texttt{Facebook} \}$ is\footnote{On each month, we only consider and score URLs that have received at least 100,000 views that month.}
\begin{align} \label{eq:score}
s_{u}^{a, t} = \sum_{i}^{k} \hat{\beta}^{t}_{ui} \wvec_i^{a, t-1} \,.
\end{align}


As intuition for these scores and value measures, under the \texttt{VF-only} weight vector, the score for each URL $u$ is $s_{u}^{\texttt{VF-only}, t} \propto \sum_{i}^{k} \hat{\beta}^{t}_{ui} \widehat{\vf}^{t-1}_i$. The difference between the \texttt{VF-only} score and the equation for total value $V_u^t$ in \cref{eq:total-value} is that the predicted engagement probability $\hat{\beta}^{t}_{ui}$ is replaced by the observed engagement count $y_{ui}$, and that the weights (derived from \texttt{VF} from the previous month) are replaced by $\widehat{\vf}^{t}_i$ for the current month.

If the predictions of each behavior vary in how noisy they are, then directly weighing them by value-faithfulness, as in  $s_{u}^{\texttt{VF-only}, t}$, may not be the optimal way to estimate total value $V_u$. Unlike the \texttt{VF-only} vector, the \texttt{user-optimal} vector takes into account \emph{both} value-faithfulness and the noise in engagement predictions, and consequently, the scores $s_{u}^{\texttt{user-optimal}, t}$ may offer a more robust estimation of total user value.

\subsection{Empirical Results}
\label{sec:empresults}
\paragraph{Accounting for variance increases user value} As shown in \Cref{fig:fb-exp}, the \texttt{user-optimal} weight vector yields recommendations that generate more total user value than either the \texttt{VF-only} or \texttt{Facebook} weight vector. This finding is surprising since the \texttt{VF-only} vector directly optimizes for value-faithfulness -- in a system without noise, it would maximize user value  by definition. The superior empirical performance of the \texttt{user-optimal} vector relative to the \texttt{VF-only} vector corroborates our theoretical findings which suggest that achieving optimal user value requires consideration of both value-faithfulness and variance.

\paragraph{Our optimal weights decrease misinformation and increase domain quality} Finally, we evaluate the impact of these weight vectors on broader downstream effects. In particular, we evaluate the prevalence of misinformation and the quality of domains in the recommended URLs. Intuitively, one may hope that the ``love'' reaction, our proxy for true value, corresponds to less misinformation and higher domain quality. 

Misinformation is measured based on outcomes from Facebook's third-party fact-checking program which are included in the Facebook URLs dataset. Domain quality is assessed using a latent measure derived by ~\citet{lin2023high} that captures various expert-created domain quality measures that assess dimensions such as factualness, unbiasedness, transparency, etc.

We find that the user-optimal weight vector yields recommendations with less misinformation and higher-quality domains, compared to both the \texttt{VF-only} and \texttt{Facebook} weight vector (\Cref{fig:fb-exp}).  Overall, the results demonstrate the potential for our approach to not only increase user value but also benefit other downstream effects of societal importance.

\section{Discussion} \label{sec:disc}
We analyzed how three aspects of behavior --- value-faithfulness, noisiness, and strategy-robustness --- affect the optimal weight vector and welfare for users and producers. In practice, the weight vector that platforms use is chosen by employees, typically by both product and engineering, based on performance in A/B tests as well as qualitative human judgment ~\citep{twitter_2023,tiktok_2020,merrill2021five,hagey2021facebook}. Understanding how different behaviors compare on the three aspects studied---value-faithfulness, noisiness, and strategy-robustness---can help system designers hone in on the most relevant weight vectors to test. For example, in certain natural settings, our theoretical results imply particular constraints on the user-optimal weight vector. Narrowing the search space of weights is particularly important as a full grid search is typically too expensive to run in real-world applications.

\paragraph{Limitations and open questions.} While our theoretical model accounted for producer effects, measuring producers' strategic adaptation remains challenging in practical applications. Theoretical models of strategic response are often simplistic, and further research is needed to effectively connect theory and practice \cite{patro2022fair}. One prevalent method for accounting for strategic responses is to periodically retrain (in this case, periodically change the weights, as we did in our empirical demonstration), but this may not always be optimal ~\citep{perdomo2020performative}. In the context of recommender systems, conducting producer-side A/B tests to gauge strategic effects can be complex due to the need to avoid interference with ongoing user-side A/B tests and to minimize violations of the Stable Unit Treatment Value Assumption (SUTVA) ~\citep{nandy2021b}. An alternative approach could involve monitoring proxies linked to producer welfare. For example, in our offline experiments ranking Facebook URLs in \Cref{sec:fb-exp}, we measured the effects of the chosen weights on domain quality and misinformation. Arguably, if the weights favor articles from low-quality domains or with more misinformation, then they would likely incentivize producers to create low-quality material. This concern was echoed by Buzzfeed's CEO Jonah Peretti, who cautioned Facebook in 2018 that a change to their weights was promoting the creation of low-quality content~\citep{hagey2021facebook}.

\section*{Acknowledgements}
We thank Gabriel Agostini, Sidhika Balachandar, Ben Laufer, Raj Movva, Kenny Peng, and Luke Thorburn for feedback on a draft version of the paper.

\bibliography{refs}
\bibliographystyle{plainnat}

\clearpage
\begin{appendix}
\onecolumn
\section{Proofs for \Cref{sec:users}} \label{app:user-proofs}

\begin{lemma} \label{lem:pnorm-der}
Let $\|\cdot\|_p$ be a $p$-norm. The partial derivative of $\|\wvec\|_p$ is
\begin{align}
\frac{\partial}{\partial \wvec_i} \|\wvec\|_p = \left(\frac{\vert \wvec_i \vert}{\|\wvec\|_p}\right)^{p-1} \operatorname{sgn}(\wvec_i) \,.
\end{align}
\end{lemma}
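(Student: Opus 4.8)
The plan is to differentiate the $p$-norm directly from its definition $\|\wvec\|_p = \left(\sum_j |\wvec_j|^p\right)^{1/p}$ using the chain rule, isolating the single term that depends on $\wvec_i$. First I would set $S = \sum_j |\wvec_j|^p$ so that $\|\wvec\|_p = S^{1/p}$, and observe that only the $i$-th summand of $S$ varies with $\wvec_i$. Applying the chain rule to the outer power gives
\begin{align}
\frac{\partial}{\partial \wvec_i} S^{1/p} = \frac{1}{p}\, S^{1/p - 1}\, \frac{\partial S}{\partial \wvec_i},
\end{align}
so the problem reduces to computing $\partial S/\partial \wvec_i = \partial |\wvec_i|^p / \partial \wvec_i$.

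Next I would handle the derivative of the absolute-value term, which is the only place requiring care. Writing $|\wvec_i|^p = (\wvec_i^2)^{p/2}$ and differentiating yields $\frac{\partial}{\partial \wvec_i}|\wvec_i|^p = p\,|\wvec_i|^{p-1}\operatorname{sgn}(\wvec_i)$, valid for $\wvec_i \neq 0$ (and, when $p>1$, extending continuously to $\wvec_i = 0$ with value $0$, which matches the convention $\operatorname{sgn}(0)=0$). Substituting this back cancels the factor of $p$:
\begin{align}
\frac{\partial}{\partial \wvec_i} \|\wvec\|_p = S^{1/p - 1}\, |\wvec_i|^{p-1}\operatorname{sgn}(\wvec_i).
\end{align}

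Finally I would rewrite the exponents to match the claimed form. Since $S^{1/p - 1} = S^{(1-p)/p} = \left(S^{1/p}\right)^{1-p} = \|\wvec\|_p^{-(p-1)}$, combining this with $|\wvec_i|^{p-1}$ gives
\begin{align}
\frac{\partial}{\partial \wvec_i} \|\wvec\|_p = \left(\frac{|\wvec_i|}{\|\wvec\|_p}\right)^{p-1}\operatorname{sgn}(\wvec_i),
\end{align}
as desired. This is essentially a routine calculus computation; the only genuine subtlety, and thus the main obstacle, is the non-differentiability of $|\cdot|$ at the origin. I would address this by restricting to points where $\wvec_i \neq 0$ (or, for $p > 1$, noting the formula holds everywhere by continuity), so that the $\operatorname{sgn}$ factor correctly records the sign and the expression is well-defined throughout the relevant domain.
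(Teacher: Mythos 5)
Your proof is correct and follows essentially the same route as the paper's: apply the chain rule directly to the definition $\|\wvec\|_p = \bigl(\sum_j |\wvec_j|^p\bigr)^{1/p}$, differentiate the single relevant summand via $\frac{\partial}{\partial \wvec_i}|\wvec_i|^p = p\,|\wvec_i|^{p-1}\operatorname{sgn}(\wvec_i)$, and simplify the exponents. Your only addition is the explicit care taken at $\wvec_i = 0$, which the paper's proof silently glosses over.
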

\begin{proof}
The $p$-norm of a vector $\wvec$ is equal to $\|\wvec\|_p = \left ( \sum_j |\wvec_j|^p \right )^{(1/p)}$ for some $p \geq 1$. Taking the derivative with respect to a component $\wvec_i$ yields,
\begin{align}
    \frac{\partial}{\partial \wvec_i} \|\wvec\|_p = \frac{1}{p} \left(\sum_j \vert \wvec_j \vert^p\right)^{\frac{1}{p}-1} \cdot p \vert \wvec_i \vert^{p-1} \operatorname{sgn}(\wvec_i) =  \left(\frac{\vert \wvec_i \vert}{\|\wvec\|_p}\right)^{p-1} \operatorname{sgn}(\wvec_i) \,.
    \end{align}
\end{proof}
\subsection*{Proof of \Cref{thm:user-w}}
\begin{proof}
  User utility in the non-strategic setting is equal to
   \begin{align}
   \mathcal{U}_{\text{user}}(\mathbf{0}, \mathbf{0}; \wvec) = \pr_{\wvec}(R(1)=1) = \ndist_\noise(\wvec^\tr \vf) = \frac{1}{2} \left [1 + \mathrm{erf} \left (\frac{\wvec^\tr \vf}{2\sqrt{\wvec^\tr \Sigma \wvec}} \right ) \right ] \,.
    \end{align}
 Since the error function $\mathrm{erf}$ is monotonically increasing on $[0, \infty)$, the optimal weight vector is simply one which solves the following optimization problem (for now, let us ignore the constraint that the weight vector satisfy $\|\wvec\|_p = 1$):
\begin{align}  \label{eq:unconstrained}
    \max_{\wvec \geq 0} g(\wvec) \text{ where }  g(\wvec) = (\wvec^\tr \vf)/\sqrt{\wvec^\tr\Sigma \wvec} \,.
\end{align}
Since the objective function is scale-invariant, i.e., $g(\wvec) = g(a \wvec)$ for any scalar $a \geq 0$, one can rewrite the problem as
\begin{align} \label{eq:constrained}
\max_{\wvec \geq 0} \wvec^\tr \vf \text{ such that } \wvec^\tr\Sigma \wvec = 1
\end{align}
because one can always scale any optimal weight vector for the original problem in \Cref{eq:unconstrained} so that $\wvec^\tr\Sigma \wvec = 1$ is satisfied. Let $\tilde{\wvec} = \Sigma^{1/2}\wvec$ and $\zvec = \Sigma^{-1/2}\vf$. Then, the constrained optimization problem in \Cref{eq:constrained} can be rewritten as
\begin{align} \label{eq:rewritten-constrained}
\max_{\tilde{\wvec} \geq 0} \tilde{\wvec}^\tr \zvec \text{ such that } \tilde{\wvec}^\tr\tilde{\wvec} = 1 \,.
\end{align}
The unique optimal solution to \Cref{eq:rewritten-constrained} is $\tilde{\wvec} = \zvec/ \|\zvec\|_2$. Thus, the unique optimal weight vector that solves \Cref{eq:constrained} is $\wvec = (\Sigma^{-1}\vf)/\|\Sigma^{-1}\vf\|_2$. Therefore, the solution set to the original problem in \Cref{eq:unconstrained} consists of all vectors $\{ \alpha \Sigma^{-1}\vf \mid \alpha > 0\}$. Thus, the optimal weight vector that is unit-norm with respect to the $p$-norm $\|\cdot\|_p$ is $\wvec = (\Sigma^{-1}\vf)/\|\Sigma^{-1}\vf\|_p$.
\end{proof}

\subsection*{Proof of \Cref{cor:user-strategy}}
\begin{proof}
    Define the vector $\zvec = \Sigma^{-1}\vf = (\Sigma_{11}^{-1}\vf_1, \dots, \Sigma_{\numb \numb}^{-1} \vf_\numb)$. By \Cref{thm:user-w}, the user-optimal weight vector is $\wvec^* = \zvec/\|\zvec\|_p$. To prove that the optimal weight vector $\wvec^*$ is increasing in the value-faithfulness $\vf_i$ and decreasing in the variance $\Sigma_i$ of a behavior $j \in [\numb]$, it suffices to prove that for $\zvec \geq 0$, the optimal weight vector $\wvec^*$ is increasing in $\zvec_j$. To do so, we can show that the partial derivative is non-negative:
    \begin{align}
    \frac{\partial}{\partial \zvec_j}\frac{\zvec_j}{\|\zvec\|_p} & = \frac{1}{\|\zvec\|_p^2} \left ( \|\zvec\|_p - \zvec_j \frac{\partial}{\partial \zvec_j}\|\zvec\|_p \right ) \\
    & = \frac{1}{\|\zvec\|_p^2} \left ( \|\zvec\|_p - \zvec_j \left(\frac{ \zvec_j}{\|\zvec\|_p}\right)^{p-1} \right ) \label{eq:norm-derivative} \\
    & = \frac{\|\zvec\|_p^p - \zvec_j^p}{\|\zvec\|_p^{p+1}} \geq 0 \,,
    \end{align}
\end{proof}
where \Cref{eq:norm-derivative} uses \Cref{lem:pnorm-der}, the partial derivative of the $p$-norm, and the fact that $\zvec \geq 0$.

\subsection*{Proof of \Cref{thm:opt-user-util}}
\begin{proof}
    We can write user utility $\mathcal{U}_{\text{user}}$ as
 \begin{align}
   \mathcal{U}_{\text{user}}(\mathbf{0}, \mathbf{0}; \wvec) & = \pr_{\wvec}(R(1)=1)\\ 
   & = \ndist_\noise(\wvec^\tr \vf) \\
   & = \frac{1}{2} \left [1 + \mathrm{erf} \left (\frac{\wvec^\tr \vf}{2\sqrt{\wvec^\tr \Sigma \wvec}} \right ) \right ] \,.
    \end{align}
 For \emph{any} weight vector $\wvec \in \mathbb{R}^\numb_{\geq 0}$, user utility $\mathcal{U}_{\text{user}}(\mathbf{0}, \mathbf{0}; \wvec)$ is monotonically increasing in $\vf_j$ and  monotonically decreasing in $\Sigma_{jj}$ for any behavior $j \in [\numb]$. Thus, user utility under the \emph{optimal} weight vector must also be monotonically increasing in $\vf_j$ and monotonically decreasing in $\Sigma_{jj}$.
\end{proof}
\section{Proofs for \Cref{sec:producers}} \label{app:prod-proofs}

\subsection*{Proof of \Cref{prop:eq}}
\begin{proof}
    The utility for producer $i$ is equal to
    \begin{align}
        \mathcal{U}_{\text{prod}}^{i}(\eff(i), \eff(-i); \wvec) = \begin{cases} 
         F_\noise(\ex[\wvec^\tr\yvec(1) - \wvec^\tr\yvec(-1)]) - c(\eff(1)) & i = 1 \\
        1 - F_\noise(\ex[\wvec^\tr\yvec(1) - \wvec^\tr\yvec(-1)]) - c(\eff(-1)) & i = -1
        \end{cases} \,,
    \end{align}
    where $F_\noise$ is the CDF of the difference in noise terms $\noise(\wvec) = \wvec^\tr \evec(-1) - \wvec^\tr \evec(1) \sim \mathcal{N}(0, 2 \wvec^\tr \Sigma \wvec)$ and the mean difference in producer scores is equal to $\ex[\wvec^\tr\yvec(1) - \wvec^\tr\yvec(-1)] = \wvec^\tr\vf + \wvec^\tr\eff(1) - \wvec^\tr\eff(-1)$.

    At the equilibrium, the first-order conditions for both producers must be satisfied:
    \begin{align}
        \label{eq:prod1_foc} & \nabla_{\eff(1)} \mathcal{U}_{\text{prod}}^{1}(\eff(1), \eff(-1); \wvec) = f_\noise(\wvec^\tr \vf + \wvec^\tr\eff(1) - \wvec^\tr\eff(-1)) \wvec - A \eff(1) = 0 \,, \\ 
        & \nabla_{\eff(-1)} \mathcal{U}_{\text{prod}}^{-1}(\eff(-1), \eff(1); \wvec) = f_\noise(\wvec^\tr \vf + \wvec^\tr\eff(1) - \wvec^\tr\eff(-1)) \wvec - A \eff(-1) = 0 \,,\label{eq:prod0_foc}
    \end{align}
    where $f_\noise$ is the density of $\noise(\wvec)$. Subtracting Equations \ref{eq:prod1_foc} and \ref{eq:prod0_foc} shows that the equilibrium strategy is symmetric, i.e, $\eff(1) = \eff(-1)$ at equilibrium. Substituting $\eff(1) = \eff(-1)$ into either equation yields $\eff(i) = f_\wvec(\wvec^\tr \vf)A^{-1}\wvec$ for $i \in \{-1, +1\}$. 
    
    To prove sufficiency, we need to consider the second-order conditions and show that the Hessian of each producer's utility is negative-definite at the equilibrium efforts. The Hessian is given by
    \begin{align}
        \nabla^2_{\eff(i)} \mathcal{U}_{\text{prod}}^{i}(\eff(i), \eff(-i); \wvec) & = \nabla_{\eff(i)} f_\noise(\wvec^\tr \vf + \wvec^\tr\eff(i) - \wvec^\tr\eff(-i)) \wvec - A \eff(i) \\
        & = f'_\noise(\wvec^\tr \vf + \wvec^\tr\eff(i) - \wvec^\tr\eff(-i))\wvec \wvec^\tr - A \,.
    \end{align}
    When both producers exert equal effort, the Hessian simplifies to $f'_\noise(\wvec^\tr \vf)\wvec \wvec^\tr - A$. By assumption $\vf > 0$, $\wvec \geq 0$, and $||\wvec||=1$, which ensures that the dot product $\wvec^\tr \vf$ is positive. When $\wvec^\tr\vf > 0$, the derivative of the zero-mean Gaussian density $f'_\noise(\wvec^\tr \vf)$ is negative, making the Hessian negative-definite. Consequently, $\eff(1) = \eff(-1) = f_\wvec(\wvec^\tr \vf)A^{-1}\wvec$ represents the unique equilibrium.
\end{proof}

\subsection*{Proof of \Cref{cor:user-strategy}}
\begin{proof}
    From \Cref{prop:eq}, the equilibrium strategy for producers is symmetric: $\eff^*(1) = \eff^*(-1)$. When the strategies are symmetric, then user utility $\mathcal{U}_{\text{user}}(\eff^*(1), \eff^*(-1); \wvec)$ is equal to user utility without strategic adaptation $\mathcal{U}_{\text{user}}(\mathbf{0}, \mathbf{0}; \wvec)$. Thus, for users, the results from the non-strategic setting still hold in the strategic setting.
\end{proof}

\subsection*{Proof of \Cref{prop:prod-vector}}
\begin{proof}
    By \Cref{prop:eq}, the unique and symmetric equilibrium strategy for producers is $\eff^{*}(1) = \eff^{*}(-1) = f_\noise(\wvec^\tr \vf)A^{-1}\wvec$. Thus, producer welfare at equilibrium is equal to
\begin{align}
& \mathcal{W}_{\text{prod}}(\eff^*(1), \eff^*(-1); \wvec) \\
&  = \frac{1}{2} - c(\eff^*(1)) \\
& = \frac{1}{2} - \frac{f_\noise(\wvec^\tr \vf)^2}{2} \wvec ^\tr A^{-1} \wvec \,,
\end{align}
and the optimal weight vector minimizes $f_\noise(\wvec^\tr \vf)^2 \wvec^\tr A^{-1}\wvec$.
\end{proof}

\subsection*{Proof of \Cref{thm:prod-welf}}
\begin{proof}
By \Cref{prop:eq}, the unique and symmetric equilibrium strategy for producers is $\eff_\wvec^{*}(1) = \eff_\wvec^{*}(-1) = f_\noise(\wvec^\tr \vf)A^{-1}\wvec$ where $f_\noise$ is the density of the difference in noise terms $\noise(\wvec) = \wvec^\tr \evec(-1) - \wvec^\tr \evec(1) \sim \mathcal{N}(0, 2\wvec^\tr \wvec)$. Thus, producer welfare at equilibrium is equal to
\begin{align}
\mathcal{W}_{\text{prod}}(\eff_\wvec^*(1), \eff_\wvec^*(-1); \vf, \Sigma, A, \wvec) & = \frac{1}{2} - \frac{1}{2}c(\eff^*(1)) - \frac{1}{2}c(\eff^*(-1)) \\
& = \frac{1}{2} - \frac{ f_\noise(\wvec^\tr \vf)^2}{2} \wvec ^\tr A^{-1} \wvec \\
& = \frac{1}{2} - \frac{1}{4 \sqrt{\pi} \wvec ^\tr \Sigma \wvec} \exp \left (- \frac{(\wvec^\tr \vf)^2}{\wvec^\tr \Sigma \wvec}  \right ) \wvec ^\tr A^{-1} \wvec \,.
\end{align}
From the above expression, it is clear that for any fixed weight vector $\wvec$, producer welfare \\ $\mathcal{W}_{\text{prod}}(\eff^*(1), \eff^*(-1); \vf, \Sigma, A, \wvec)$ is monotonically increasing as the strategy-robustness $A_{jj}$ or \\ value-faithfulness $\vf_j$ of any behavior $j \in [k]$ increases. Thus, producer welfare under the optimal weight vector $\mathcal{W}_{\text{prod}}^*(\vf, \Sigma, A)$ must also be monotonically increasing in strategy-robustness and value-faithfulness. 

However, the optimal producer welfare $\mathcal{W}_{\text{prod}}^*(\vf, \Sigma, A)$ is not necessarily monotonic in the behavior's variance $\Sigma_{jj}$. The variance only affects producer welfare by changing $f_\noise(\wvec^\tr \vf)$, the density of the difference in noise terms $\noise(\wvec) \sim \mathcal{N}(0, 2\wvec^\tr \wvec)$. Let $\sigma^2 = 2\wvec^\tr\Sigma\wvec$ be the variance of $\noise(\wvec)$. For any fixed weight vector $\wvec \in \mathbb{R}_{\geq 0}^\numb$, as $\sigma^2$ approaches $0^{+}$ or $+\infty$, producer welfare approaches its maximum possible value: 
\begin{align}
& \lim_{\sigma^2 \rightarrow 0^{+}} \mathcal{W}_{\text{prod}}(\eff_\wvec^*(1), \eff_\wvec^*(-1); \vf, \Sigma, A, \wvec) = 1/2 \,, \\
& \lim_{\sigma^2 \rightarrow \infty} \mathcal{W}_{\text{prod}}(\eff_\wvec^*(1), \eff_\wvec^*(-1); \vf, \Sigma, A, \wvec) = 1/2 \,.
\end{align}
Thus, the optimal producer welfare also approaches the maximum possible value as $\sigma^2$ approaches either $0^{+}$ or $+\infty$:
\begin{align}
& 1/2 \geq \lim_{\sigma^2 \rightarrow 0^+} \mathcal{W}_{\text{prod}}^*(\vf, \Sigma, A) \geq \lim_{\sigma^2 \rightarrow 0^+} \mathcal{W}_{\text{prod}}(\eff_\wvec^*(1), \eff_\wvec^*(-1); \vf, \Sigma, A, \wvec) =  1/2\,, \\
& 1/2 \geq \lim_{\sigma^2 \rightarrow \infty} \mathcal{W}_{\text{prod}}^*(\vf, \Sigma, A) \geq  \lim_{\sigma^2 \rightarrow \infty} \mathcal{W}_{\text{prod}}(\eff_\wvec^*(1), \eff_\wvec^*(-1); \vf, \Sigma, A, \wvec) = 1/2 \,,
\end{align}
Therefore, the only way for optimal producer welfare to be monotonic in a behavior's variance is if it is constant over $\sigma^2 \in (0, \infty)$, i.e., is always equal to $1/2$, for any given value-faithfulness vector $\vf$ or cost matrix $A$. This is clearly untrue in general, and thus, optimal producer welfare is not necessarily monotonic in a behavior's variance $\Sigma_{jj}$.

Finally, for any fixed weight vector $\wvec$, we have that producer welfare approaches the maximal possible value as any of the three aspects of behavior go to $+\infty$:
 \begin{align}
 \lim_{z \rightarrow \infty} \mathcal{W}_{\text{prod}}(\eff_\wvec^*(1), \eff_\wvec^*(-1); \vf, \Sigma, A, \wvec) = 1/2
 \end{align}
for any $z \in \{A_{jj}, \vf_j, \Sigma_{jj} \mid j \in [\numb] \}$. Furthermore,
 \begin{align}
 1/2 \geq \lim_{z \rightarrow \infty} \mathcal{W}_{\text{prod}}^*(\vf, \Sigma, A) \geq \lim_{z \rightarrow \infty} \mathcal{W}_{\text{prod}}(\eff_\wvec^*(1), \eff_\wvec^*(-1); \vf, \Sigma, A, \wvec) = 1/2 \,,
 \end{align}
 and thus, the optimal producer welfare also approaches the maximum possible value: $\lim_{z \rightarrow \infty} \mathcal{W}_{\text{prod}}^*(\vf, \Sigma, A) = 1/2$.
\end{proof}
\section{Synthetic experiments} \label{app:sim}
\subsection*{Parameters for \Cref{fig:vf-var-tradeoff}}
The parameters used for the simulation are
\begin{align}
    & \text{value: } v(1) = 1, v(-1) = 0\,, \\
    & \text{behavioral biases: } \bias(-1) = \mathbf{0}, \bias(1)_1 = 0.75\,, \\
    & \text{variance: } \Sigma_{11} = 1\,, \\
    & \text{cost of manipulation: } A=I \,.
\end{align}
 The bias $\bias(1)_2$ and variance $\Sigma_{22}$ of the second behavior is adjusted so that the second behavior has the relative value-faithfulness and variance given by the $x$ and $y$-axes. 

\subsection*{Parameters for \Cref{fig:main}}
The default parameters used for each of the subplots are
\begin{align}
    & \text{value: } v(1) = 1, v(-1) = 0\,, \\
    & \text{behavioral biases: } \bias(-1) = \mathbf{0}, \bias(1)_1 = \bias(1)_2 = 0.75\,, \\
    & \text{variance: } \Sigma_{11} = 1, \Sigma_{22} = 3\,, \\
    & \text{cost of manipulation: } A=I \,.
\end{align}
The three subplots are generated by adjusting the bias $\bias(1)_2$, variance $\Sigma_{22}$, or cost $A_{22}$ of the second behavior so that it has the relative value-faithfulness, variance, or strategy-robustness given by the $x$-axis.


\begin{figure}[t]
  \centering
  \begin{subfigure}[b]{0.48\textwidth}
    \includegraphics[width=\textwidth]{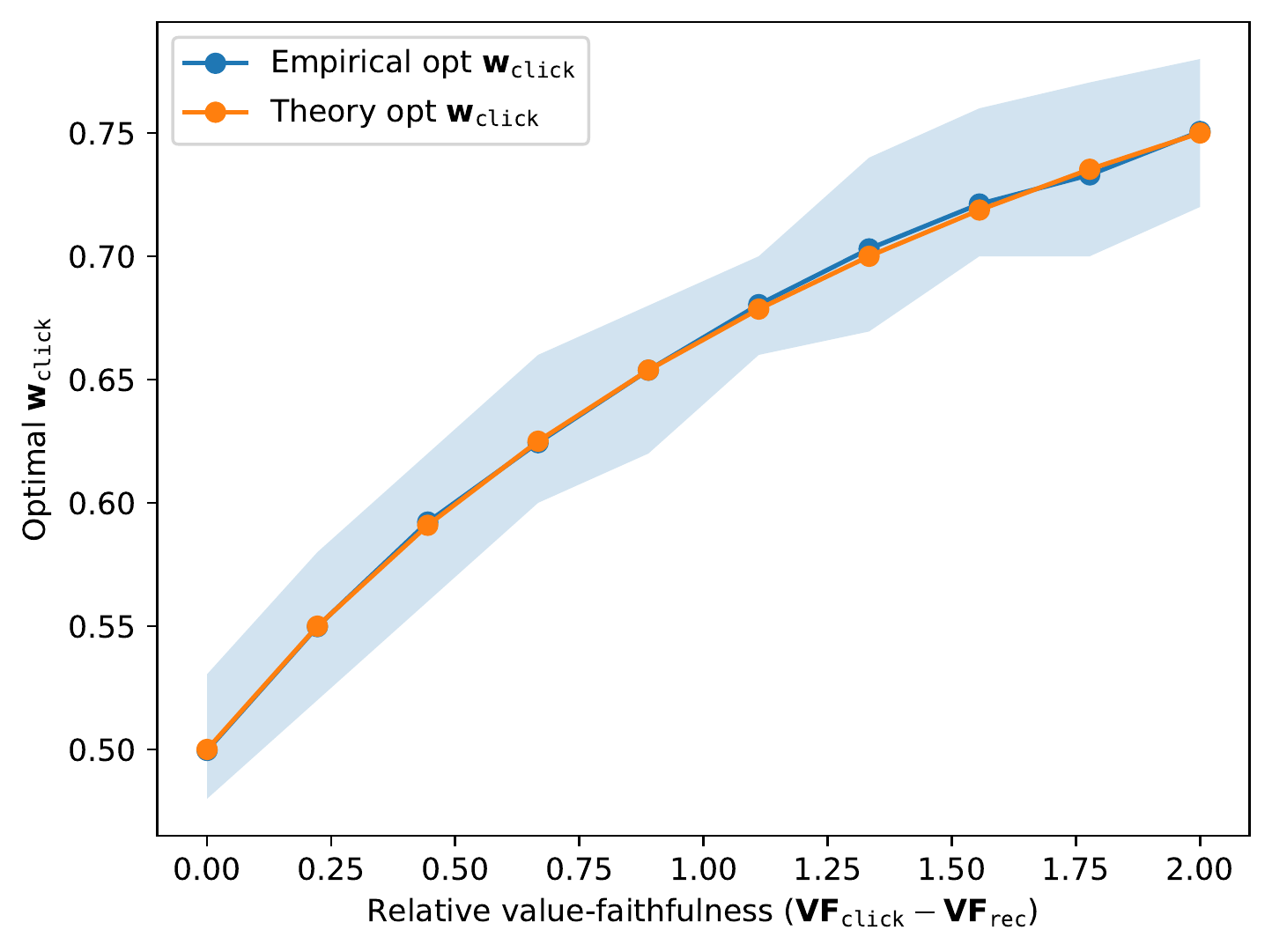}
  \end{subfigure}
  \hfill 
  \begin{subfigure}[b]{0.48\textwidth}
    \includegraphics[width=\textwidth]{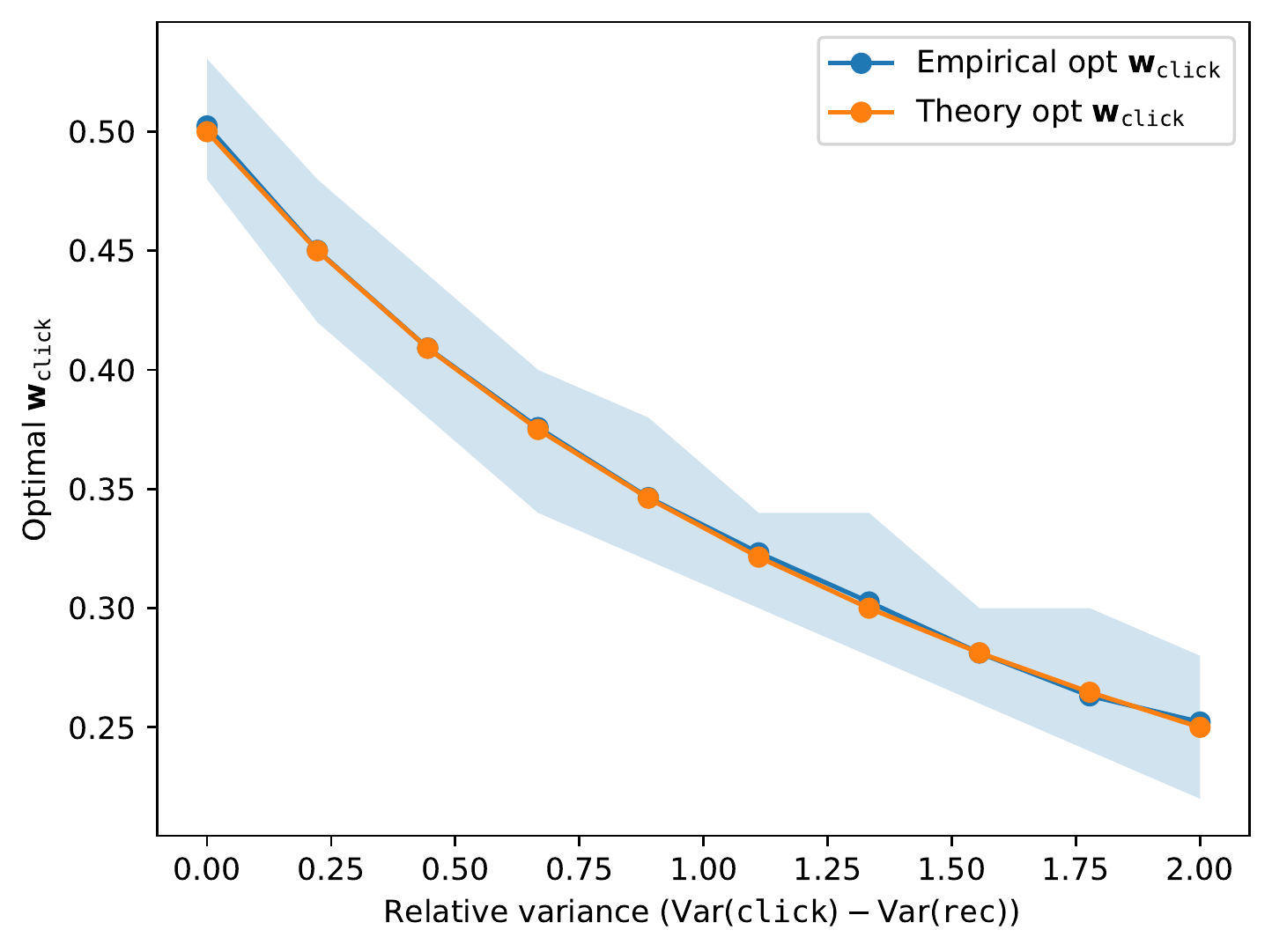}
  \end{subfigure}
  \caption{The optimal weight vector as a function of value-faithfulness and variance in the homogeneous setting. The default parameters for the simulations are $\mu_{\texttt{click}} = \mu_{\texttt{rec}} = 1$ and $\Sigma_{11} = \Sigma_{22} = 2$. The left figure is generated by plotting the optimal weight vector as $\mu_{\texttt{rec}}$ increases (and consequently, when value-faithfulness increases), and the right figure is generated by increasing $\Sigma_{22}$. The confidence bands show 95\% confidence intervals based on 100 simulations of the data.}
  \label{fig:hom-sim}
\end{figure}

\begin{figure}[t]
  \centering
  \begin{subfigure}[b]{0.48\textwidth}
    \includegraphics[width=\textwidth]{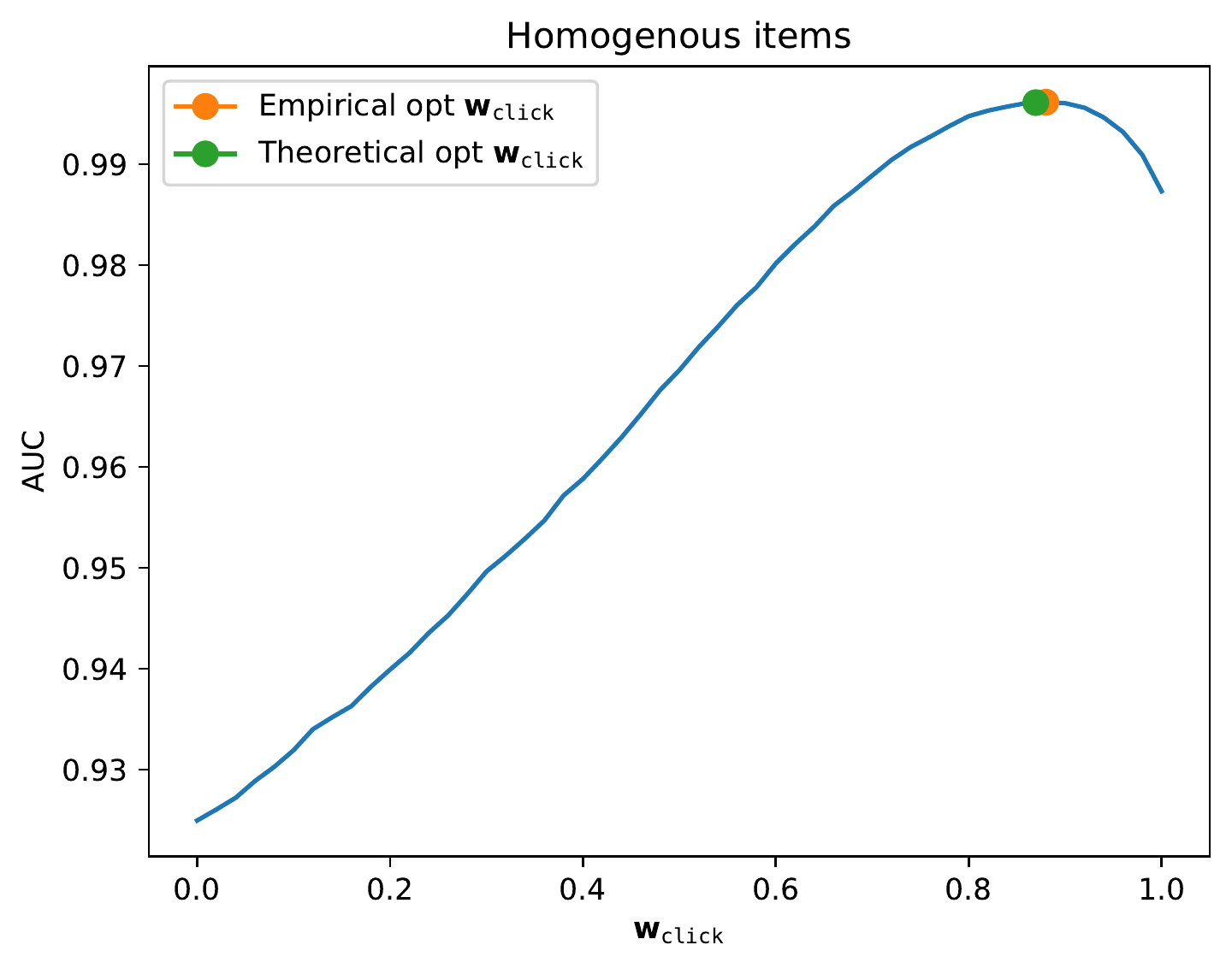}
  \end{subfigure}
  \hfill 
  \begin{subfigure}[b]{0.48\textwidth}
    \includegraphics[width=\textwidth]{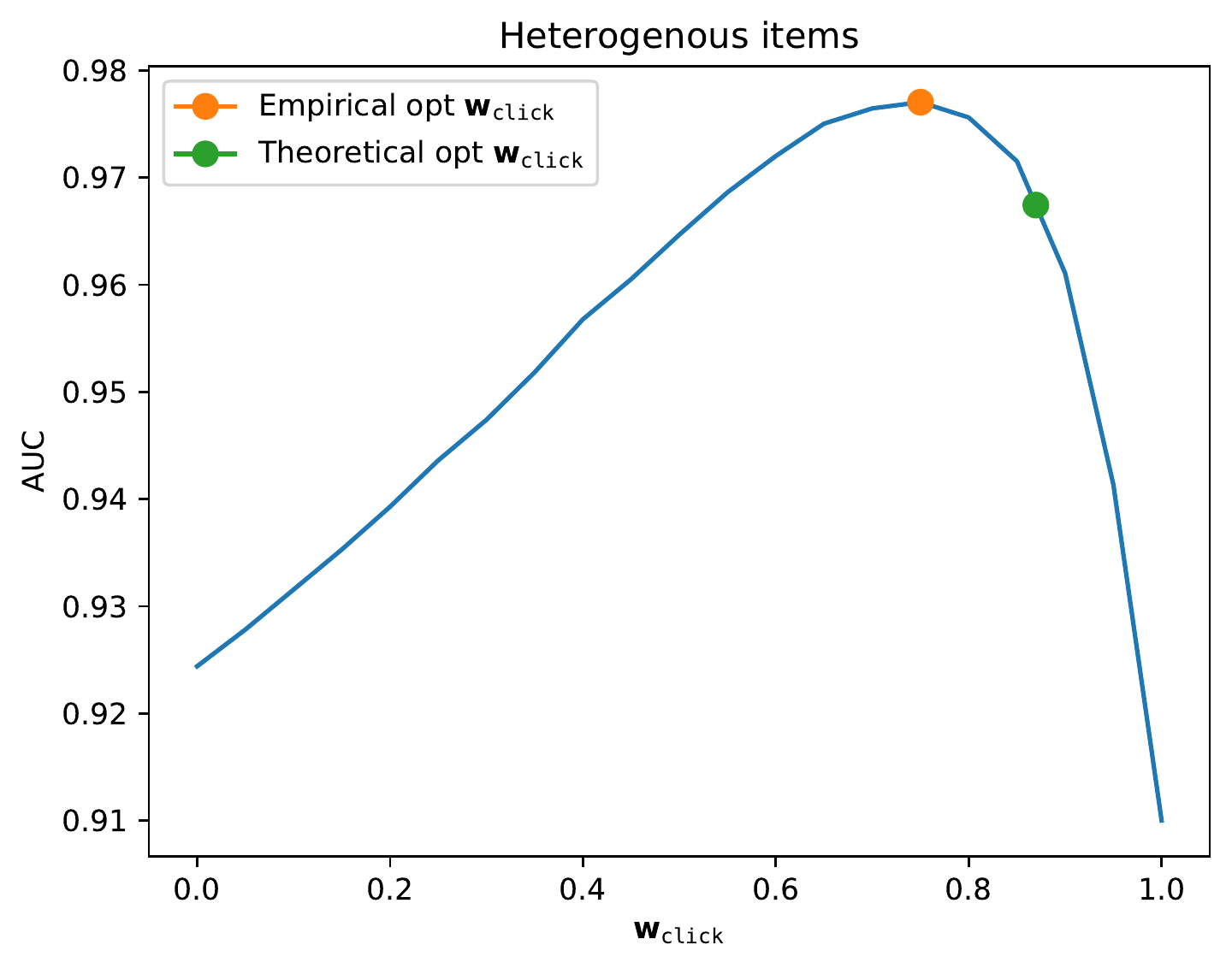}
  \end{subfigure}
  \caption{A comparison of the empirical and theoretical optimal weight vector in the homogeneous and heterogeneous setting. In both settings, the variance on click is $\Sigma_{11} = 0.1$ and the variance on recommend is $\Sigma_{22} = 2$. In the homogeneous setting, $\mu_{\texttt{click}} = 1$ and $\mu_{\texttt{rec}} = 3$. In the heterogeneous setting, $\alpha_{\texttt{click}} = 0$, $\beta_{\texttt{click}} = 2$, $\alpha_{\texttt{rec}} = 2$, and $\beta_{\texttt{click}} = 4$. Thus, the mean predictions across items are the same in both the homogeneous and heterogeneous setting: $\mu_{\texttt{click}} = (\mu_{\texttt{i, click}})/n_{+}$ and $\mu_{\texttt{rec}} = (\mu_{\texttt{i, rec}})/n_{+}$. However, while the theory optimal weight vector and empirical optimal weight vector closely match in the homogeneous setting, they have a distinct gap in the heterogeneous setting.}
  \label{fig:het-sim}
\end{figure}

\subsection{Additional simulations with $n > 2$ items}
In simulation, we consider the non-strategic setting, i.e., when $\eff \triangleq \mathbf{0}$, and extend it to a setting with $n$ producers or items. Here, the user values $n_{+}$ items and doesn't value $n_{-}$ items. All valued items are assumed to have the same positive value $v_{+}$ while unvalued items have a value of zero. In the two-item setting, we defined a user's utility as the probability that the higher-valued item is ranked first (\Cref{eq:user-utility}). Then, a natural metric to optimize for in the $n$ item setting is the probability that a randomly-picked valued item is ranked above a randomly-picked unvalued item, i.e., the AUC. 

The user can interact with the items using two different behaviors: (1) \emph{click} and (2) \emph{recommend}. In our simulations, we assume that recommend is more value-faithful than click but also higher variance. To extend value-faithfulness to the setting with $n$ items, we define the mean value-faithfulness $\overline{\vf}$ as the mean difference in behavior scores between valued items and unvalued items:
\begin{align}
    & \overline{\vf} = \frac{1}{n_{+}}\sum_{i : v(i) > 0} \yvec(i) - \frac{1}{n_{-}}\sum_{i : v(i) = 0} \yvec(i)\,.
\end{align}
We investigate two settings: one in which, given their value, each item has the same mean behavior predictions, and the other, in which items are heterogeneous. In both, we compare (a) the weight vector that maximizes the empirical AUC and (b) the user-optimal weight vector given by Theorem \ref{thm:user-w} (in which we substitute $\overline{\vf}$ for $\vf$). 

In the homogeneous setting, the predictions are simulated as
\begin{align}
     \yvec(i) \sim \begin{cases} \mathcal{N}(\mathbf{0}, \Sigma) & v(i) = 0 \\
      \mathcal{N}\left (\begin{bmatrix} \mu_{\texttt{click}} \\ \mu_{\texttt{rec}} \end{bmatrix}, \Sigma \right) & v(i) = v_{+}
    \end{cases} \,,
\end{align}
i.e., all unvalued items have the same mean prediction of $0$ for both clicks and recommend while all valued items have the same mean prediction $\mu_{\texttt{click}} > 0$ and $\mu_{\texttt{rec}} > 0$ for clicks and recommend.

In the heterogeneous setting, the behavior predictions for unvalued items are simulated the same way, i.e. $\yvec(i) \sim (\mathbf{0}, \Sigma)$ for $i$ such that $v(i) = 0$. But for valued items, the mean of the behavior predictions is heterogeneous, i.e.,
\begin{align}
    \mu_{i, \texttt{click}} & \sim \mathrm{Unif}[\alpha_{\texttt{click}}, \beta_{\texttt{click}}]\,, \\
    \mu_{i, \texttt{rec}} & \sim \mathrm{Unif}[\alpha_{\texttt{rec}}, \beta_{\texttt{rec}}]\,, \\
     \yvec(i) & \sim \mathcal{N}\left (\begin{bmatrix} \mu_{i, \texttt{click}} \\ \mu_{i, \texttt{rec}} \end{bmatrix}, \Sigma \right)
   \,.
\end{align}

Figure \ref{fig:hom-sim} shows that in the homogeneous setting, the theory-optimal weight vector and the empirical-optimal weight vector match closely even as value-faithfulness and variance change. However, Figure \ref{fig:het-sim} demonstrates that in the heterogeneous setting, the theory-optimal weight vector and the empirical-optimal weight vector may not necessarily coincide.

\section{Recommending URLs on Facebook} \label{app:deming}
In this section, we explain how we create an estimator of the probability that users react to a URL $u$ with a given behavior $i$.

Let $y_{uig}$ be the number of times that group $g$ reacted to URL $u$ with behavior $i$, and let $x_{ug}$ be the number of times that group $g$ viewed URL $u$. By summing over different demographic groups, for any URL $u$, we can calculate $y_{ui} \coloneqq \sum_{g} y_{uig}$, the total number of times that users engaged with the URL using behavior $i$. Similarly, we can determine the total number of times that users viewed the URL, $x_u = \sum_{g} x_{ug}$. Under normal circumstances, a straight-forward estimator for the probability of a user engaging with URL $u$ using behavior $i$ might be the ratio $y_{ui} / x_{u}$. However, because of the differential privacy noise introduced into the dataset, it is possible for the engagement counts $y_{ui}$ to be greater than the total number of views $x_{u}$, or even for either $y_{ui}$ or $x_{u}$ to be negative. 

To accurately estimate engagement probabilities while taking into account the differential privacy noise, we leverage the statistical literature on errors-in-variables models. Specifically, we use Deming regression~\citep{adcock1878,kummell1879,Deming_1943}, a regression method that adjust for known Gaussian measurement errors in both the independent and dependent variables. This approach is particularly suitable for our analysis as Gaussian noise was deliberately added to the data to ensure differential privacy, and its parameters are known.

Specifically, we fit the following Deming regression model:
\begin{align}
    y_{uig} = \beta_{ui} x_{ug}\,, \\
    x_{ug} = \bar{x}_{ug} + \epsilon^{x}_{ug}\,, \\
    y_{uig} = \bar{y}_{uig} + \epsilon^{y}_{uig}\,,
\end{align}
where $\beta_{ui}$ is the coefficient being estimated, the variables $\bar{y}_{uig}$ and $\bar{x}_{ug}$ the true, unobserved engagement and view counts (before Gaussian noise was added), and the variables $\epsilon^{x}_{ug} \sim \mathcal{N}(0, \sigma_x^2)$ and $\epsilon^{y}_{uig} \sim \mathcal{N}(0, \sigma_i^2)$ are the independent noise terms. In the Facebook URLs dataset, the standard deviation of the noise added to `views' is 2228, to `likes' is 22, and to the five emoji reactions is 10~\citep{fburls}.

In our model, the coefficient $\beta_{ui}$ is constrained to be in the range $[0, 1]$ and serves as our estimate of the probability that a user engages with URL $u$ using behavior $i$.
\section{Measuring the Three Aspects} \label{app:measuring-aspects}

Here, we outline ways that the platform could practically measure each of the three aspects we study: value-faithfulness, variance, and strategy-robustness. After quantitatively or qualitatively measuring the three aspects, it may become clear that certain sets of weights are more relevant to test than others. For example, our theoretical results (\Cref{cor:user-strategy}) suggest that if behavior $i$ is both less value-faithful and higher variance than behavior $j$, then for the user, it is better for behavior $j$ to have higher weight than behavior $i$. While our model is stylized, such insights may be useful heuristics in practice.

\paragraph{Value-faithfulness.} The value-faithfulness of each behavior could be measured through user surveys in which users are explicitly asked whether or not they value a piece of content~\citep{andre2012gives,milli2023twitter}. Many platforms have used such kinds of surveys, e.g., Youtube measures what they call \emph{valued watchtime} through user surveys ~\citep{goodrow_2021}. Facebook explicitly used surveys measuring how much users value different kinds of interactions in choosing the weights: \emph{``the base weight of all the interactions are derived based on producer-side experiments which measure value to the originator/producer (of the content)''} \citep{cameron_wodinsky_degeurin_germain_2022,litt2020meaningful}.\footnote{The quote is from a leaked document from the Facebook Files titled ``The Meaningful Social Interactions Metric Revisited: Part Two''.}

Suppose that a platform has a dataset $\mathcal{D}$ consisting of, for each surveyed item, the behaviors the user engaged in on that item $\ovec \in \{0, 1\}^\numb$, and their associated value rating $v \in \{0, 1\}$. Then, the value-faithfulness of behavior $j$ could be estimated as the empirical probability of observing behavior $j$ given that the item was valued minus the empirical probability of observing behavior $j$ given that the item was not valued:
\begin{align}
    \widehat{\vf}_{j} = & \frac{1}{n}\left \lvert \{ (\ovec, v) \in \mathcal{D} : \ovec_j = 1, v = 1 \} \right \rvert \\
    & - \frac{1}{n}\left \lvert \{ (\ovec, v) \in \mathcal{D} : \ovec_j = 1, v = 0 \} \right \rvert \,.
\end{align}
This is a simple estimate that aggregates all users together in determining the value-faithfulness of a behavior. One could imagine more sophisticated approaches that determine the value-faithfulness of each behavior in a way that is personalized to each user~\citep{maghakian2022personalized}.

\paragraph{Variance.} The average variance of the behavior predictors can be measured empirically through standard bootstrap resampling~\citep{efron1992bootstrap}. Assume that the platform learns a predictor $f : \mathcal{X} \rightarrow [0, 1]^\numb$ that maps user and item features to predictions of whether the user will engage with the item through each behavior. The platform learns the predictor from a dataset of $n$ historical user-item interactions $\mathcal{H} = \{(\xvec, \ovec)\}$ where each user-item interaction is represented by user-item features $\xvec \in \mathcal{X}$ and observed behaviors $\ovec \in \{0, 1\}^\numb$. Then, the variance of predicting behavior $j$ on datapoint $\xvec$ is $\mathrm{Var}_j(\xvec) = \ex_{\mathcal{H}}[(f_j(\xvec) - \ex_{\mathcal{H}}[f_j(\xvec)])^2]$ where the expectation is taken over different bootstrap samples of $\mathcal{H}$. The goal is to estimate the mean variance $V(j) = \ex_{\xvec}[\mathrm{Var}_j(x)]$ across datapoints.

Let $f^i, \dots, f^m$ be predictors learned from different bootstrap samples of the dataset $\mathcal{H}$ and $\bar{f} \triangleq (1/m) \sum_{i=1}^m f^i$. Then, an estimate for the mean variance $V(j)$ of the $j$-th behavior predictor is
\begin{align}
\widehat{V}(j) = \frac{1}{mn} \sum_{(\xvec, \ovec) \in \mathcal{H}} \sum_{i=1}^{m}  (f^i(\xvec) - \bar{f}(\xvec))^2 \,.
\end{align}

\paragraph{Strategy-robustness.} Strategy-robustness is, perhaps, the most difficult to estimate directly as it requires anticipating how producers will strategically adapt to changes in the objective function. This strategic adaptation typically takes time and evolves as producers share strategies. Though it is difficult to quantitatively measure strategy-robustness, it may be evaluated more qualitatively through domain knowledge and producer testimony. For example, after Youtube switched to focusing on optimizing watchtime instead of views, they wrote a blog post stating that ``many of the tactics we’ve heard from creators to optimize for YouTube’s discovery features may in fact backfire'' such as making videos shorter~\citep{meyerson2012youtube}. Testimony and interviews from producers can help designers understand whether a given behavior is likely to be successfully gamed or not.
\end{appendix}

\end{document}